\documentclass{article}

\PassOptionsToPackage{numbers}{natbib}
\usepackage[preprint]{neurips_2026}

\usepackage[utf8]{inputenc} 
\usepackage[T1]{fontenc}    
\usepackage{hyperref}       
\usepackage{url}            
\usepackage{booktabs}       
\usepackage{amsfonts}       
\usepackage{nicefrac}       
\usepackage{microtype}      
\usepackage{xcolor}         
\usepackage[ruled,vlined]{algorithm2e}

\usepackage[textsize=tiny]{todonotes}
\usepackage{xspace}
\usepackage{comment}
\usepackage{multirow}
\usepackage{array}
\usepackage{makecell}
\usepackage{enumitem}
\usepackage{tcolorbox}
\usepackage{amsmath}
\usepackage{amssymb}
\usepackage{mathtools}
\usepackage{amsthm}
\usepackage{wrapfig}
\usepackage{subcaption}
\usepackage[table]{xcolor}
\usepackage{multirow}

\theoremstyle{plain}
\newtheorem{theorem}{Theorem}[section]

\newtheorem{corollary}[theorem]{Corollary}
\theoremstyle{definition}

\theoremstyle{remark}

\newcommand{\frameworkname}{xMemory\xspace}
\usepackage{xcolor}

\usepackage[normalem]{ulem}
\usepackage{xcolor}

\title{Beyond RAG for Agent Memory: Retrieval by Decoupling and Aggregation}

\author{%
  Zhanghao Hu$^{1,2}$\thanks{Equal contribution.} \quad
  Qinglin Zhu$^{1}$\footnotemark[1] \quad
  Runcong Zhao$^{1}$ \quad
  Di Liang$^{2}$ \\
  Hanqi Yan$^{1}$ \quad
  Yulan He$^{1}$ \quad
  Lin Gui$^{1}$ \\
  \\
  $^{1}$King's College London \\
  $^{2}$Tencent, Yuanbao Team \\
  \texttt{\{zhanghao.hu,  lin.1.gui\}@kcl.ac.uk}
}

\begin{document}

\maketitle

\begin{abstract}
Standard Retrieval Augmented Generation (RAG) is poorly matched to agent memory. Unlike large heterogeneous corpora, agent memory forms a bounded and coherent interaction stream in which many spans are highly correlated or near duplicates. As a result, flat top-$k$ similarity retrieval often returns redundant context, while summary-centric hierarchies can blur the subtle details that distinguish one candidate from another. We argue that agent memory should follow the principle of \emph{decoupling before aggregation}: the system should first isolate reusable facts, updates, and distinguishing details from similar histories, and only then organise them for efficient retrieval. Based on this principle, we propose \frameworkname, which constructs a revisable hierarchical memory structure from original messages to segments, memory components, and groups. \frameworkname segments interaction history into local events, decouples each segment into memory components, aggregates related components into high-level groups using a sparsity--semantic faithfulness objective, and maintains this structure incrementally as memory evolves. At inference time, \frameworkname retrieves top-down, first selecting a compact backbone of complementary groups and components, and then expanding to segments and raw messages only when additional evidence reduces the reader's uncertainty. Experiments on LoCoMo and PerLTQA across diverse open source and closed source LLMs show consistent gains in answer quality and inference token efficiency, supported by analyses of redundancy, evidence density, and coverage.
\end{abstract}

\section{Introduction}

\begin{figure*}[t]
\centering
\includegraphics[width=\linewidth]{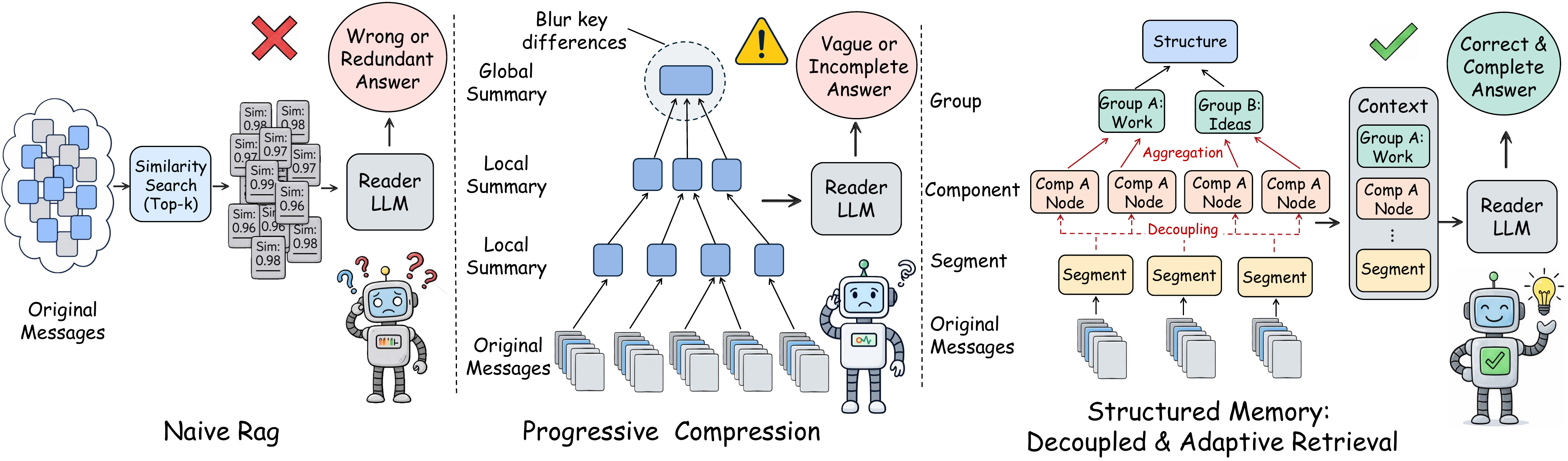}
\caption{\textbf{From similarity top-$k$ retrieval to evidence-oriented retrieval for agent memory.}
Agent memory forms a coherent and highly correlated interaction stream, in which many spans are near duplicates. As a result, fixed top-$k$ similarity retrieval often overweights redundant evidence instead of surfacing what is most useful for the query. \frameworkname first decouples similar histories into memory components that isolate distinctive evidence, and then organises them into a retrieval hierarchy that yields shorter, more sufficient context.}
\label{fig:introduction}
\end{figure*}

Large language model (LLM) agents increasingly rely on external memory to manage extended interaction histories, enabling capabilities such as multi-session dialogue, personalised assistance, and long-term task execution \cite{DBLP:conf/uist/ParkOCMLB23,tan-etal-2025-prospect,DBLP:journals/corr/abs-2412-13103}. In these settings, an agent answers new queries by retrieving useful evidence from memory systems rather than relying only on parametric knowledge \cite{DBLP:conf/aaai/ZhongGGYW24,DBLP:journals/corr/abs-2310-08560}. Current systems often approach memory retrieval as a standard RAG problem, relying on fixed top-$k$ similarity \cite{fang2026lightmem,kang-etal-2025-memory,xu2025amem}. 
However, agent memory poses a different retrieval challenge from standard RAG corpora. Unlike large heterogeneous document collections, agent memory forms a coherent interaction stream with substantial overlap in events and wording \cite{hu2025memory,NEURIPS2020_6b493230}.  Consequently, the central retrieval difficulty is no longer merely locating generally relevant text, but rather \emph{distinguishing decisive evidence hidden among highly similar histories}. 
As a result, similarity top-k retrieval often returns multiple memories that are generally relevant but largely repetitive. This is particularly problematic when the answer hinges on a small update, constraint, or factual difference among otherwise similar interactions.

This mismatch also exposes a limitation of existing memory organisation strategies. Recent systems use summaries, note structures, and hierarchical abstractions to improve scalability and navigation \cite{DBLP:journals/corr/abs-2501-13956,kang-etal-2025-memory,xu2025amem}. 
Although these structures reduce the cost of flat retrieval, many still organise memory into a progressively compressed hierarchy, where nodes become fewer and more compressed at higher levels.
Such designs can improve efficiency, but they do not directly address the need to identify answer-critical evidence from highly similar histories. Repeated compression often preserves shared background information while blurring the subtle details that distinguish one candidate from another.

Taken together, these considerations suggest a core design principle: \emph{decoupling before aggregation}. As shown in Fig. \ref{fig:introduction}, rather than treating long, highly similar interaction logs as whole retrieval units, the memory system should first decompose them into smaller evidence units that isolate reusable facts, state updates, and distinguishing details. 
Aggregation then organises these units into a higher-level memory structure that remains compact and coherent. Crucially, this structure should be revisable rather than fixed: as agent memory evolves, new interactions can reveal better relations among previously stored components and trigger corresponding updates to the high-level organisation.

Based on this principle, we propose \frameworkname, a framework that couples \textbf{memory structuring} with \textbf{adaptive retrieval}. 
Starting from raw messages, \frameworkname builds a hierarchical memory structure by following the principle of \emph{decoupling before aggregation}: it extracts \emph{segments} from raw messages as local events, decouples similar histories into \emph{memory components} that isolate reusable facts and distinctive attributes, and then aggregates related components into \emph{groups} for high-level access. 
To organise these components into useful groups, \frameworkname evaluates the component-to-group organisation with a guidance objective that balances sparsity and semantic faithfulness. 
This objective guides structure updates as memory evolves. 
When new memory arrives, components are attached to compatible groups when possible, and the structure is revised as needed by splitting overly large or internally heterogeneous groups and merging overly small or isolated ones. Retrieval finally proceeds from coarse units to fine ones: the system first selects complementary groups and components, and expands to segments and original messages only when more detailed evidence is needed.

Our contributions are summarised as follows:
\begin{enumerate}

\item We identify a key mismatch between standard RAG and agent memory, and argue that agent memory should follow the principle of \emph{decoupling before aggregation}.
    \item We propose \frameworkname, which constructs a revisable hierarchical memory structure by segmenting local interaction history, decoupling segments into memory components, and aggregating related components into high-level groups.
    \item We develop a top-down adaptive retrieval method over this memory structure, and show on LoCoMo and PerLTQA that it improves both answer quality and inference token efficiency.

\end{enumerate}
\section{Related Work}

\paragraph{RAG-style retrieval for agent memory.}
Many memory systems follow the standard RAG paradigm, storing past interactions as retrievable units and selecting a top-$k$ set by embedding similarity \cite{DBLP:journals/corr/abs-2310-08560, DBLP:journals/corr/abs-2602-19320}. This works well for heterogeneous corpora, but agent memory is often a coherent interaction stream with many semantically similar spans \cite{DBLP:journals/corr/abs-2512-13564}. As a result, similarity retrieval can return redundant memories while missing small updates or temporal distinctions that determine the answer. \frameworkname addresses the mismatch by retrieving decoupled evidence units rather than raw spans. 

\paragraph{Hierarchical and graph-based memory organisation.}
Recent systems organise memory into summaries, notes, temporal layers, semantic memories, or graph communities to improve scalability \cite{DBLP:journals/corr/abs-2602-05665,kang-etal-2025-memory,xu2025amem}. While these structures reduce the cost of flat retrieval, many remain summary- or schema-centric: higher-level nodes abstract over lower-level content instead of preserving fine-grained distinctions. \frameworkname instead follows \emph{decoupling before aggregation}: it first extracts reusable memory components from local segments, and then groups related components for efficient access.

\paragraph{Adaptive retrieval over dynamic memory structures.}
RAG-style memory systems usually treat past interactions as a static retrieval corpus, accessed by top-$k$ search, neighbour expansion, or multi-stage filtering~\cite{fang2026lightmem,DBLP:journals/corr/abs-2602-19320}. 
Some hierarchical systems support incremental updates by inserting new memories into existing clusters or nodes, but this remains distinct from retroactive reorganisation~\cite{DBLP:journals/corr/abs-2508-03341,li2026cam}. 
This distinction matters for agent memory, where later interactions can change how earlier evidence should be grouped. 
\frameworkname therefore maintains a revisable hierarchy that uses split and merge operations to reorganise previously stored components before adaptive retrieval.

\section{Method}
\label{sec:method}

\begin{figure*}[t]
\centering
\includegraphics[width=\linewidth]{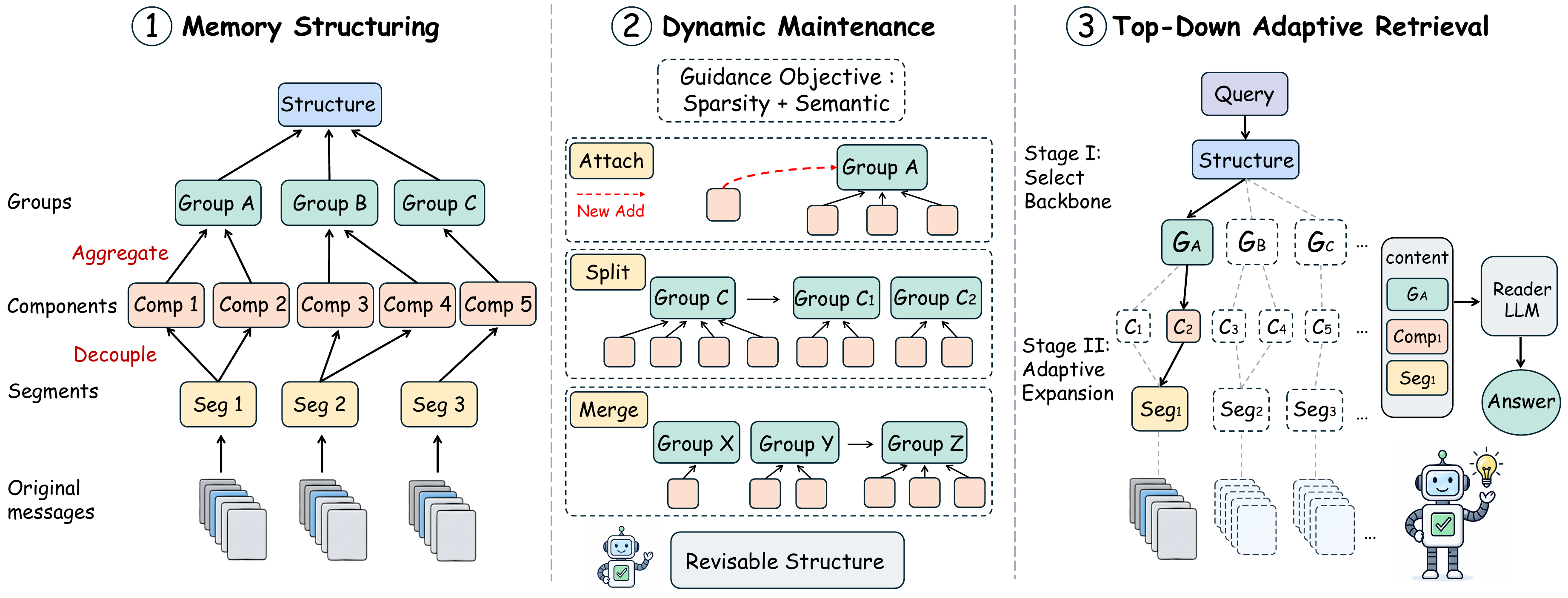}
\caption{\textbf{Overview of \frameworkname.}
\frameworkname follows the principle of \emph{decoupling before aggregation}. It first organises raw interaction history into local segments, then decomposes each segment into reusable memory components that isolate decisive evidence, and finally aggregates related components into high-level groups for efficient retrieval. At retrieval time, the system first selects a compact backbone of relevant groups and components, and expands to segments and raw messages only when additional textual evidence reduces reader uncertainty.}
\label{fig:methodology}
\end{figure*}

Given an interaction history $H=\{m_1,\dots,m_T\}$ and a query $q$, the goal is to construct a context $C$ for answering $q$. As shown in Figure~\ref{fig:methodology}, \frameworkname first organises the interaction history into a hierarchical memory structure. 
The structure is built in three stages: raw messages are divided into local \emph{segments}; each segment is then \emph{decoupled} into memory components that isolate reusable facts, constraints, and updates; and related components are finally \emph{aggregated} into higher-level groups for efficient access. Retrieval proceeds top-down over this structure, starting from relevant groups and components, and expanding to segments and original messages only when finer-grained textual evidence is needed.

\subsection{Building a Revisable Memory Structure}
We next describe how \frameworkname constructs the hierarchy and updates it as new memory arrives.

\paragraph{From messages to components.}
We first partition the message stream into contiguous \emph{segments}, each capturing a coherent local event, because answer-critical evidence in agent memory is often expressed across neighbouring turns rather than in isolated messages. Segments preserve this intact local context, but they are still too coarse to retrieve directly. We therefore extract one or more \emph{memory components} from each segment, where each component represents a reusable fact, constraint, attribute, relation, or state update. This is the \emph{decoupling} step of \frameworkname: it separates the decisive evidence from the surrounding local context before retrieval, while retaining links back to the source segment so that intact context can still be recovered when needed.

\paragraph{From components to groups.}
Once evidence has been decoupled into memory components, \frameworkname organises these components into higher-level \emph{groups}, which serve as compact access units for retrieval. This is the \emph{aggregation} step: rather than building the hierarchy directly over raw spans or summaries, we aggregate over already identified evidence units. 
The role of grouping is to make high-level retrieval both efficient and discriminative. If groups are too broad, retrieval becomes less discriminative and more redundant; if they are too fragmented, related evidence is scattered across many small units, weakening coverage for multi-fact queries. We therefore organise components into groups using an objective that balances \emph{sparsity} and \emph{semantic faithfulness}. 

Let $\mathcal{P}=\{C_k\}_{k=1}^{K}$ denote a partition of $N$ memory components into $K$ groups, where $n_k = |C_k|$. We score an organisation by
\begin{equation}
f(\mathcal{P}) = \mathrm{SparsityScore}(\mathcal{P}) + \mathrm{SemScore}(\mathcal{P}).
\label{eq:guidance}
\end{equation}
The sparsity term favours balanced groups by using the normalised inverse of the expected within-group scanning cost:
\begin{equation}
\mathrm{SparsityScore}(\mathcal{P})
= \frac{N^2}{K\sum_{k=1}^{K}n_k^2}.
\label{eq:sparsity}
\end{equation}
The semantic term encourages intra-group coherence and keeps the inter-group geometry navigable by Stage~I's kNN-based retrieval (Eq.~\eqref{eq:rep_select}).
\begin{equation}
\label{eq:semscore}
\mathrm{SemScore}(\mathcal{P})
= \frac{1}{K}\sum_{k=1}^{K}
\left(\frac{1}{n_k}\sum_{i\in C_k}\cos(\mathbf{x}_i,\boldsymbol{\mu}_k)\right)\cdot g(s_k),
\end{equation}
where $\mathbf{x}_i$ is the embedding of component $i$, $\boldsymbol{\mu}_k$ is the centroid of group $k$, and \(g(s_k)=\exp\!\left(-\frac{(s_k-\bar s)^2}{2\sigma^2}\right)\), with \(s_k=\max_{j\neq k}\cos(\boldsymbol{\mu}_k,\boldsymbol{\mu}_j)\), \(\bar s=\mathrm{median}(\{s_k\})\), and \(\sigma=\mathrm{median}(\{|s_k-\bar s|\})+\varepsilon\).
The bell-shaped $g(s_k)$ acts on inter-group geometry: near-duplicate centroids ($s_k$ above the median) reintroduce the redundancy that decoupling removes, while isolated ones ($s_k$ below the median) drop out of Stage~I's kNN expansion as ``semantic islands'' \cite{zhang2025leanrag}. Distinctiveness of individual components is already preserved upstream by decoupling; $g(s_k)$ only shapes group-centroid geometry.

\paragraph{Dynamic maintenance.}
Because agent memory evolves over time, the memory structure is maintained incrementally rather than fixed after construction. A new component with embedding $\mathbf{x}$ is attached to its nearest group \(k^\star = \arg\max_{1 \le k \le K} \cos(\mathbf{x}, \boldsymbol{\mu}_k)\) if the similarity exceeds a threshold; otherwise a new group is created. 
Groups that become too large or internally incoherent are considered for splitting, while small or isolated groups are considered for merging. In both cases, the selected operation is the one that most improves Eq.~\eqref{eq:guidance}.

We additionally maintain top-$k$ nearest-neighbour links among groups to support local retrieval. Further details and a qualitative example are provided in Appendices~\ref{app:construction}, \ref{app:structure}, and~\ref{app:case_study}.

\subsection{Retrieval from the Memory Structure}

The memory structure separates two roles that raw interaction history entangles: groups and memory components expose high-level evidence, while segments and original messages preserve the intact local context in which that evidence was expressed. Retrieval therefore proceeds in two stages. We first identify a compact high-level backbone of relevant evidence, and only then expand to lower-level text when more context is necessary for answer generation.

\paragraph{Stage I: high-level backbone selection.}
Answers in agent memory are often supported by multiple related evidence units rather than a single best match. Stage~I therefore selects a compact set of complementary high-level units, rather than repeatedly retrieving near-duplicate evidence by pure similarity ranking.
To support this, we maintain kNN links among groups in the memory structure. These links record which high-level units are semantically close, even when they belong to different groups, allowing Stage I to avoid repeatedly selecting near-duplicate evidence and to identify related evidence regions that remain uncovered.

 We first retrieve a small pool of groups by similarity between query and centroid, and include their neighbouring groups and associated components as candidate set $V$. Let $R \subseteq V$ denote the selected evidence backbone.
 For each candidate node $i \in V$, let $\mathcal{N}(i)$ denote its neighbours under the kNN links, and let $w_{iu}>0$ denote the similarity weight on edge $(i,u)$. We define the covered set as \(C(R)=\{u\in V \mid \exists r\in R,\; u\in \{r\}\cup \mathcal{N}(r)\}\), and the newly covered nodes contributed by candidate $i$ as \(\Delta(i;R)=(\{i\}\cup \mathcal{N}(i))\setminus C(R)\). At each step, we greedily select the next unit by trading off structural coverage against query relevance:
\begin{equation}
i^\star = \arg\max_{i\in V\setminus R}
\left[ \frac{\sum_{u\in \Delta(i;R)} w_{iu}}{Z}
+ \tilde{s}(q,i)
\right],
\label{eq:rep_select}
\end{equation}
where $\tilde{s}(q,i)\in[0,1]$ is the normalised query--node similarity, and $Z$ denotes the total candidate coverage weight. We apply Eq.~\eqref{eq:rep_select} hierarchically,  first selecting groups and then refining to components.

\paragraph{Stage II: adaptive text expansion.}
The high-level backbone indicates where useful evidence resides, but it does not yet determine how much original text should be revealed to the reader. Expanding all linked segments and messages would often reintroduce the same redundancy that the high-level structure is designed to avoid. Stage~II therefore adds lower-level text only when it further reduces the reader's uncertainty.

Starting from the selected components, we gather their linked segments and construct a coarse context. Let $U(C,q)$ denote the reader's uncertainty under context $C$ and query $q$, instantiated in our implementation via predictive entropy; concrete definitions and implementation details are given in Appendix~\ref{app:retrieval}. For a candidate segment $s$, its marginal uncertainty reduction is
\begin{equation}
\Delta U(s \mid C,q)=U(C,q)-U(C\cup\{s\},q).
\label{eq:uncertainty_gain}
\end{equation}
A segment is included only if it yields a positive marginal reduction, i.e., $\Delta U(s \mid C,q)>0$. For a candidate message $m$, we analogously compute \(\Delta U(m \mid C,q)=U(C,q)-U(C\cup\{m\},q)\)
and include it only when it further reduces uncertainty. Retrieval stops when no remaining candidate segment or message provides additional uncertainty reduction.

\section{Experiments}

\subsection{Experimental Setup}
\label{sec:experiment setup}
\paragraph{Datasets and metrics.}
We evaluate long-term agent memory on two complementary benchmarks: LoCoMo~\cite{DBLP:conf/acl/MaharanaLTBBF24} and PerLTQA~\cite{du-etal-2024-perltqa}. LoCoMo contains 50 multi-session dialogues, with an average of $\sim$18K tokens and $\sim$300 turns. Following prior work, we report results on its four answerable categories and omit the adversarial subset, which does not provide gold answers for the metrics used here. PerLTQA evaluates personalised long-term memory over longer contexts (around 25K tokens on average), with answers that are often sentence-style rather than short spans. We report BLEU-1~\cite{papineni-etal-2002-bleu} and token-level F1 on both datasets, and additionally ROUGE-L~\cite{lin-2004-rouge} on PerLTQA to better capture sequence-level overlap for longer-form answers.

\paragraph{Baselines.}
We compare against six baselines, grouped by how they access and organise memory. These baselines cover full-context reading, flat similarity retrieval, post-retrieval compression, and representative structured-memory designs. \textbf{Flat or minimally structured baselines} include \textbf{(1) Full Memory}, which provides the complete available history to the reader model without retrieval; \textbf{(2) Naive RAG}, which chunks original messages and retrieves the top-$20$ chunks by vector similarity; and \textbf{(3) LightMem}, which reduces retrieval cost through multi-stage filtering and compression~\cite{fang2026lightmem}. \textbf{Structured memory baselines} include \textbf{(4) Nemori}, which builds a hierarchical memory from episodic memories to higher-level semantic memories for long-term recall~\cite{DBLP:journals/corr/abs-2508-03341}; \textbf{(5) A-Mem}, which stores memory as structured notes connected by dynamic links~\cite{xu2025amem}; and \textbf{(6) MemoryOS}, which organises memory into temporally layered storage with lifecycle management~\cite{kang-etal-2025-memory}.

\paragraph{Implementation details.}
\label{sec:implementation_details}
We evaluate all compared methods with three reader LLMs: two recent open-source models, Qwen3-8B~\cite{DBLP:journals/corr/abs-2505-09388} and Llama-3.1-8B-Instruct~\cite{DBLP:journals/corr/abs-2407-21783}, and one closed-source model, GPT-5 nano. Final answers are generated with greedy decoding (temperature $=0.0$) for deterministic evaluation~\cite{DBLP:conf/emnlp/0001YMWRCYR23}. All retrieval and memory construction embeddings use \texttt{text-embedding-3-small}, and each method uses the same backbone model for memory construction and answer generation. Since LoCoMo and PerLTQA require different answer formats, we use dataset-specific answer prompts 
within each dataset. For GPT-5 nano, which does not expose token-level logits, we estimate uncertainty with GPT-4.1-mini. All xMemory hyperparameters are fixed across datasets and backbone models; their values and implementation details are reported in Appendix~\ref{app:implementation_details} for reproducibility.

\subsection{Main Results}

\newcommand{\flatmark}{\textsuperscript{\textsc{F}}}
\newcommand{\structmark}{\textsuperscript{\textsc{S}}}

\begin{table*}[t]
\centering
\small
\setlength{\tabcolsep}{3.0pt}
\renewcommand{\arraystretch}{1.12}
\caption{Main results on LoCoMo. Methods marked with \flatmark{} are flat retrieval baselines, and methods marked with \structmark{} are structured memory baselines. We report BLEU and F1 for each question category and the average. Token/query denotes the average total tokens per query (lower is better) during inference time. Best results within each backbone model are in bold.}
\label{tab:locomo_main}
\resizebox{ 0.95\textwidth}{!}{%
\begin{tabular}{l|lrrrrrrrrrrr}
\toprule
\textbf{Model} & \textbf{Method} &
\multicolumn{2}{c}{\textbf{Multi-hop}} &
\multicolumn{2}{c}{\textbf{Temporal}} &
\multicolumn{2}{c}{\textbf{Open-domain}} &
\multicolumn{2}{c}{\textbf{Single-hop}} &
\multicolumn{2}{c}{\textbf{Average}} &
\multicolumn{1}{c}{\makecell{\textbf{Token}\\\textbf{/query}}} \\
\cmidrule(lr){3-4}\cmidrule(lr){5-6}\cmidrule(lr){7-8}\cmidrule(lr){9-10}\cmidrule(lr){11-12}
& &
\textbf{BLEU} & \textbf{F1} &
\textbf{BLEU} & \textbf{F1} &
\textbf{BLEU} & \textbf{F1} &
\textbf{BLEU} & \textbf{F1} &
\textbf{BLEU} & \textbf{F1} & \\
\midrule

\multirow{7}{*}{\rotatebox{90}{Qwen3-8B}}
& Full Memory\flatmark   & 24.63 & 31.66 & 10.02 & 12.58 & 12.48 & 17.59 & 33.80 & 44.00 & 25.83 & 33.54 & 18535.90 \\
& Naive RAG\flatmark     & 22.46 & 34.28 & 17.21 & 21.32 & 12.35 & 17.08 & 35.68 & 45.22 & 27.95 & 36.48 & 8633.28 \\
& LightMem\flatmark      & 19.63 & 26.23 & 22.67 & 27.61 & 9.83 & 14.37 & 31.34 & 40.66 & 26.04 & 33.66 & 5545.35 \\
\cmidrule(lr){2-13}
& Nemori\structmark      & 24.68 & 36.82 & 25.78 & 33.76 & 12.71 & 18.48 & 38.02 & 47.52	 &31.44 & 40.88 & 7754.66 \\
& A-Mem\structmark       & 23.32 & 33.26 & 22.29 & 32.53 &9.08  & 16.72 &33.61 & 42.52 &27.84 & 37.13 & 9103.46 \\
& MemoryOS\structmark    & 17.12 & 21.74 & 26.84 & 32.26 & 14.53 & 16.25 & 35.83 & 40.37 & 29.20 & 33.76 & 7234.66 \\
\cmidrule(lr){2-13}
& \frameworkname\textbf{ (Ours)} & \textbf{27.24} & \textbf{38.57} & \textbf{29.58} & \textbf{37.46} & \textbf{15.55} & \textbf{20.69} & \textbf{40.94} & \textbf{50.94} & \textbf{34.48} & \textbf{43.98} & \textbf{4711.29} \\

\midrule

\multirow{7}{*}{\rotatebox{90}{Llama-3.1-8B-Ins}}
& Full Memory\flatmark   & 15.11 & 21.52 & 6.06 & 8.06 & 7.78 & 11.13 & 23.29 & 35.13 & 17.23 & 25.50 & 18524.70 \\
& Naive RAG\flatmark     & 13.77 & 19.40 & 6.20 & 8.61 & 6.99 & 10.22 & 21.90 & 35.37 & 16.21 & 25.30 & 11522.60 \\
& LightMem\flatmark      & 16.43 & 23.76 & 12.44 & 15.61 & 9.83 & 13.70 & 21.68 & 25.08 & 18.05 & 22.15 & 5708.11 \\
\cmidrule(lr){2-13}
& Nemori\structmark      & 18.18 & 26.30 & 19.23 & 26.09 & 9.24 & 12.06 & 26.19 & 40.18 & 22.21 & 32.95 & 9802.69 \\
& A-Mem\structmark       & 15.88 & 20.90 & 19.92 & 24.91 & 9.92 & 11.34 & 24.86 & 39.90 & 21.52 & 31.51 & 10268.77 \\
& MemoryOS\structmark    & 13.26 & 17.83 & 19.88 & 23.95 & 11.48 & 12.92 & 24.76 & 28.67 & 20.81 & 24.72 & 7212.07 \\
\cmidrule(lr){2-13}
& \frameworkname\textbf{ (Ours)} & \textbf{22.21} & \textbf{30.99} & \textbf{21.20} & \textbf{27.42} & \textbf{11.58} & \textbf{14.61} & \textbf{28.43} & \textbf{41.15} & \textbf{24.73} & \textbf{34.77} & \textbf{5539.97} \\

\midrule

\multirow{7}{*}{\rotatebox{90}{GPT-5 nano}}
& Full Memory\flatmark   & 22.95 & 34.07 & 20.02 & 23.50 & 21.61 & 26.42 & 32.71 & 46.97 & 27.58 & 38.44 & 18544.25 \\
& Naive RAG\flatmark     & 21.34 & 32.31 & 28.43 & 29.05 & 22.63 & 26.56 & 36.68 & 46.35 & 31.28 & 38.94 & 7531.46 \\
& LightMem\flatmark      & 23.13 & 32.07 & 41.06 & 55.23 & 21.27 & 26.38 & 37.52 &44.31  & 34.60 & 43.23 & 6850.04 \\
\cmidrule(lr){2-13}
& Nemori\structmark      & 24.80 & 37.11 & 41.56 & 54.25 & 22.61 & 29.29 & 40.35 & 51.67 & 36.65 & 48.17 & 9154.76 \\
& A-Mem\structmark       & 24.12& 35.88 & 40.87 & 54.67 & 20.83 & 27.61 & 38.42 & 46.13 & 35.22 & 44.88 & 9610.94 \\
& MemoryOS\structmark    & 24.47 & 36.13 & 39.78 & 55.34 & 21.45 &28.71 & 39.39 & 48.68 & 35.62 & 46.53 & 7029.02 \\
\cmidrule(lr){2-13}
& \frameworkname\textbf{ (Ours)} & \textbf{27.56} & \textbf{39.97} & \textbf{46.10} & \textbf{57.62} & \textbf{25.53} & \textbf{30.92} & \textbf{41.14} & \textbf{52.63} & \textbf{38.71} & \textbf{50.00} & \textbf{6581.20} \\

\bottomrule
\end{tabular}%
}
\end{table*}

\paragraph{LoCoMo: retrieval over coherent multi-session histories.}
Table~\ref{tab:locomo_main} reports results on LoCoMo. Across all three backbones, \frameworkname achieves the best average performance, with especially clear gains on multi-hop and temporal questions. For example, with Qwen3-8B, average BLEU/F1 improves from 31.44/40.88 for Nemori to \textbf{34.48}/\textbf{43.98}; with GPT-5 nano, it improves from 36.65/48.17 to \textbf{38.71}/\textbf{50.00} while reducing token usage from 9155 to \textbf{6581} per query.

Compared with both flat retrieval and structured memory baselines, these results suggest that agent memory benefits from decoupling decisive evidence before high-level organisation and retrieval. Full Memory and Naive RAG preserve large amounts of raw context but often return redundant or weakly discriminative evidence, while LightMem reduces cost mainly through post-retrieval compression and performs worse on multi-hop and temporal questions. Structured baselines introduce higher-level organisation, but their memory structures are not explicitly centred on decoupled evidence units and revisable grouping. By contrast, \frameworkname first decouples distinctive evidence into memory components, organises them into revisable groups, and retrieves top-down over this structure.

\begin{table}[t]
\centering
\small
\setlength{\tabcolsep}{3.8pt}
\renewcommand{\arraystretch}{1.12}
\caption{Main results on PerLTQA. Methods marked with \flatmark{} are flat retrieval baselines, and methods marked with \structmark{} are structured memory baselines. We report BLEU, F1, R-L (ROUGE-L), and Tok.\ (token usage per query during inference time, lower is better). Best results within each backbone model are in bold.}
\label{tab:perltqa_main}
\resizebox{0.9 \textwidth}{!}{%
\begin{tabular}{l|cccc|cccc|cccc}
\toprule
\multirow{2}{*}{\textbf{Method}}
& \multicolumn{4}{c|}{\textbf{Qwen3-8B}}
& \multicolumn{4}{c|}{\textbf{Llama-3.1-8B-Ins}}
& \multicolumn{4}{c}{\textbf{GPT-5 nano}} \\
& BLEU & F1 & R-L & Tok.
& BLEU & F1 & R-L & Tok.
& BLEU & F1 & R-L & Tok. \\
\midrule

Full Memory\flatmark
& 32.73 & 42.15 & 36.65 & 25045
& 33.00 & 44.40 & 39.43 & 25098
& 24.37 & 34.97 & 29.81 & 25212 \\
Naive RAG\flatmark
& 32.08 & 41.37 & 35.95 & 6274
& 33.67 & 44.84 & 39.65 & 9531
& 27.35 & 37.76 & 32.35 & 10756 \\
LightMem\flatmark
& 29.12		 & 40.21 & 34.67 & 7692
& 31.83 & 42.33 & 38.37 & \textbf{5452}
& 24.84 & 34.51 & 29.33 & 7579 \\
\midrule

Nemori\structmark
& 32.55 & 42.80 & 38.05 & 9092
& 41.01 & 49.62 & 44.65 & 11440
& 33.44 & 41.79 & 38.43 & 11883 \\
A-Mem\structmark
& 31.36 & 40.92 & 36.45 & 9864
& 35.56		 & 45.79 & 41.38 & 7707
& 33.12 & 41.17 & 37.94 & 14718 \\
MemoryOS\structmark
& 35.14 & 42.35 & 38.48 & 6499
& 34.79		 & 42.03 & 38.22 & 6511
& 27.66 & 33.83 & 31.44 & 12669 \\

\midrule

\frameworkname\textbf{ (Ours)}
& \textbf{36.24} & \textbf{47.08} & \textbf{42.50} & \textbf{5087}
& \textbf{42.68} & \textbf{52.37} & \textbf{47.84} & 6066
& \textbf{36.79} & \textbf{46.23} & \textbf{41.25} & \textbf{7307} \\

\bottomrule
\end{tabular}%
}
\end{table}

\paragraph{PerLTQA: generalisation to longer personalised memory.}
Table~\ref{tab:perltqa_main} reports results on PerLTQA, which contains longer contexts and more sentence-style answers than LoCoMo. \frameworkname remains consistently effective across all three backbone models, showing that the proposed retrieval principle transfers beyond multi-session dialogue recall to longer personalised memory reasoning. With Qwen3-8B, it achieves the best BLEU/F1/ROUGE-L at \textbf{36.24}/\textbf{47.08}/\textbf{42.50} while using the fewest tokens; with Llama-3.1-8B-Instruct, it reaches \textbf{42.68}/\textbf{52.37}/\textbf{47.84}. 
These gains suggest that \frameworkname improves selective evidence access while preserving the information needed for coherent sentence-level reconstruction.

\paragraph{Efficiency.}
We report \emph{tokens per query} as the average end-to-end inference cost, including retrieval, answer generation, and auxiliary calls. 
Across both datasets, \frameworkname achieves better answer quality with competitive or fewer tokens than strong memory baselines, indicating that its gains come from delivering more concentrated evidence rather than exposing more history. 
For example, on LoCoMo with Qwen3-8B, \frameworkname reduces token usage from 7755 for Nemori to \textbf{4711} while also improving average BLEU/F1 from 31.44/40.88 to \textbf{34.48}/\textbf{43.98}. These results suggest that token efficiency in agent memory should be measured by evidence utility rather than context reduction alone.

\section{Analysis}

\subsection{Ablation Studies}

\paragraph{Retrieval stage analysis.}
We first ablate memory structuring and the two retrieval stages on LoCoMo with Qwen3-8B. 
As shown in Figure~\ref{fig:ablation_summary} (a), Memory-only improves average BLEU/F1 from 27.95/36.48 for Naive RAG to 31.81/40.77, showing that retrieval over decoupled memory units is more effective than retrieval over flat raw chunks, even with basic similarity matching. 
Adding Stage~I improves selection over groups and components, while Stage~II improves uncertainty guided expansion to segments and messages. 
Combining both stages gives the best trade-off, with the highest average BLEU/F1 and the lowest token usage. 

\paragraph{Group size upper bound analysis.}
We next study the upper bound on the number of memory components per group. 
This parameter controls routing arity: a larger candidate set within each group makes decisive evidence harder to identify in dialogue memory with high semantic similarity, while an overly small group size scatters related facts. 
Guided by a Fano style lower bound, which shows that routing error increases with candidate size when discriminative information is bounded, we choose 12 as a practical threshold and then validate it empirically. 
Figure~\ref{fig:ablation_summary} (b) shows that this setting performs best, reaching 34.48 BLEU and 43.98 F1 with 4.48 components per group on average. 
Larger groups weaken discrimination, while smaller groups fragment the hierarchy. 
Full results and the detailed theoretical motivation are provided in Appendix~\ref{app:ablation_details} and Appendix~\ref{app:fano_motivation}.

\begin{figure}[t]
\centering
\includegraphics[width=\linewidth]{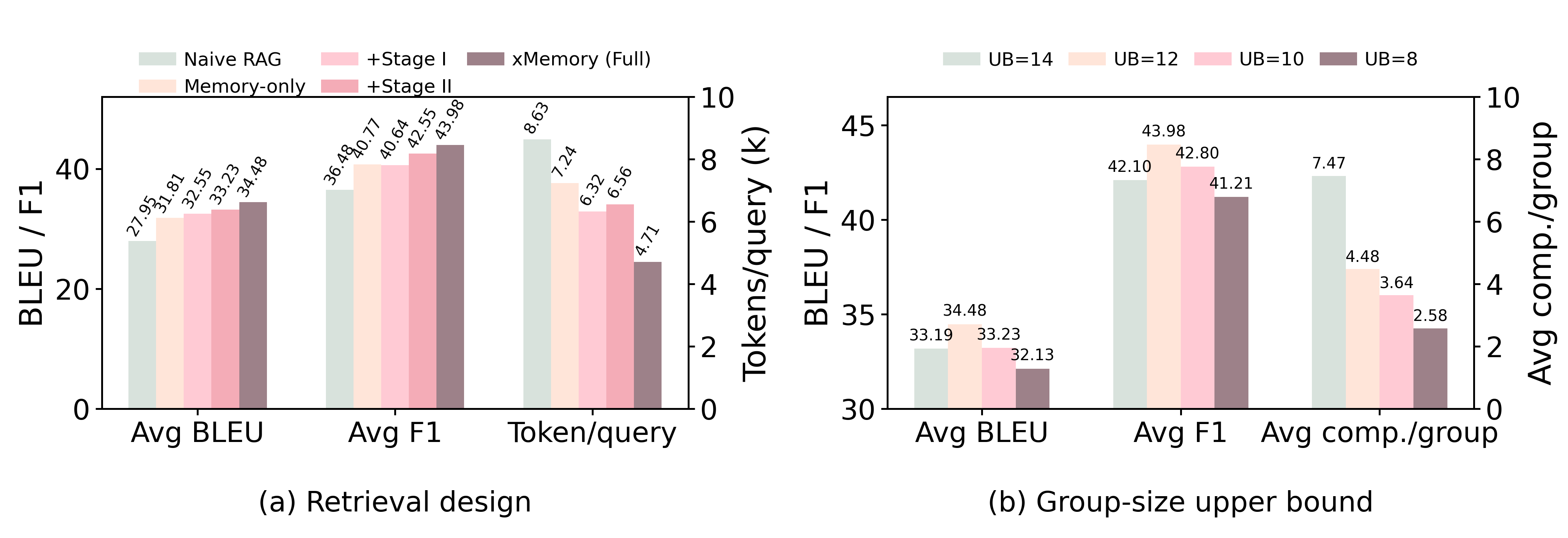}
\caption{\textbf{Ablation studies on LoCoMo with Qwen3-8B.}
(a) Retrieval-stage ablation summarising the effect of memory structuring, Stage~I, and Stage~II on average BLEU/F1 and token usage. 
(b) Effect of the upper bound (UB) of memory components per group, showing average BLEU/F1 and the resulting average number of components per group.
}
\label{fig:ablation_summary}
\end{figure}

\subsection{Retroactive Restructuring in Memory Construction}
\label{sec:analysis:reassign}

Unlike standard RAG, where the retrieval corpus is usually fixed, agent memory evolves as new interactions arrive. 
In \frameworkname, later insertions can trigger \textit{split} or \textit{merge} operations over high-level groups, thereby revising the assignment of previously created memory components. 
We measure this effect with the \textbf{dynamic reassignment ratio}, defined as the fraction of existing components whose group assignment changes during later insertions.

\begin{wrapfigure}{r}{0.5\linewidth}
    \vspace{-1.0em}
    \centering
    \includegraphics[width=\linewidth]{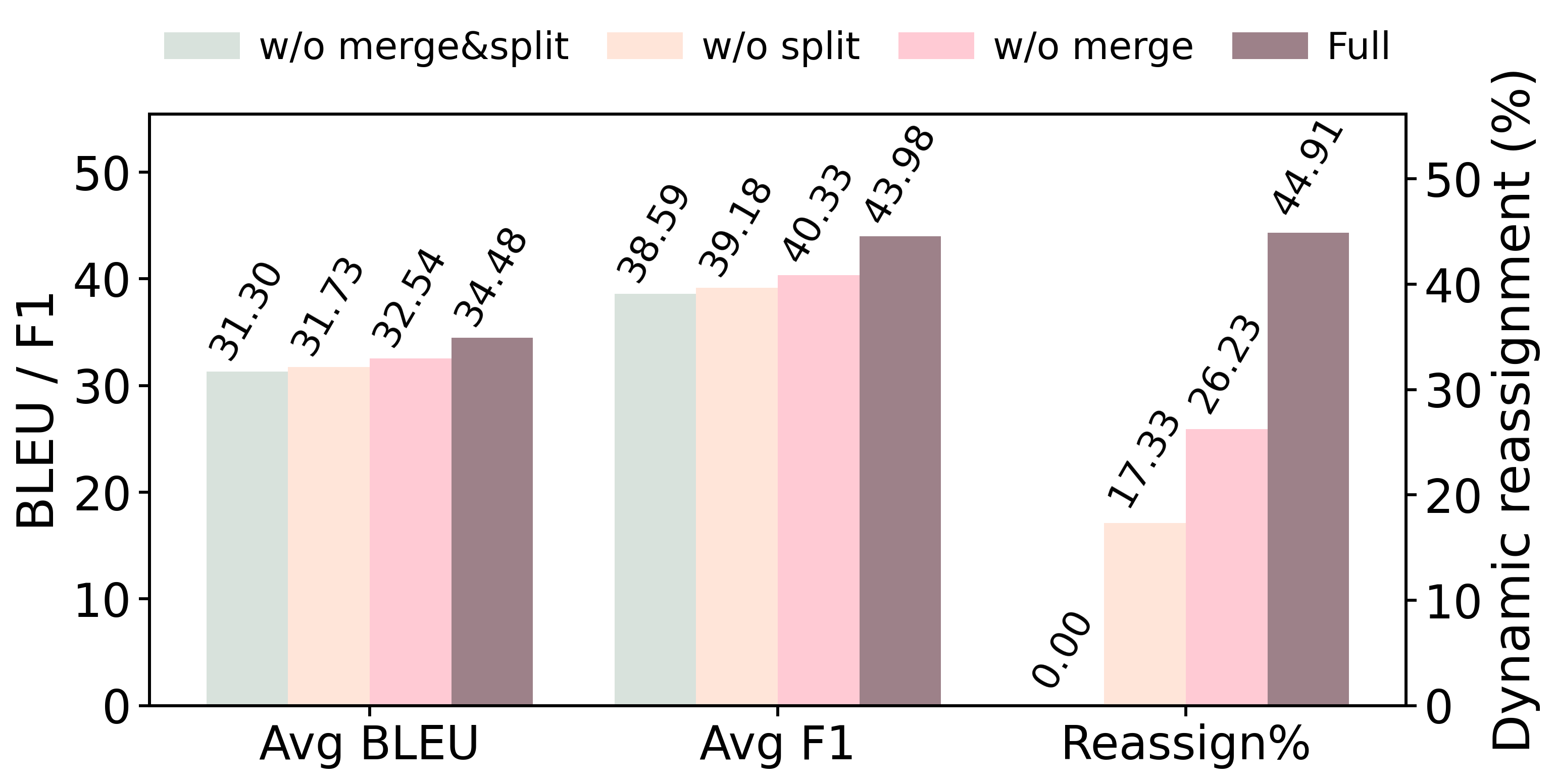}
    \vspace{-0.8em}
\caption{\textbf{Retroactive restructuring.}
Freezing high-level restructuring yields 0\% reassignment and lower QA performance, while the full system performs substantial reassignment and achieves the best Avg.\ BLEU/F1.}
    \label{fig:retroactive_restructuring}
    \vspace{-2.0em}
\end{wrapfigure}

Figure~\ref{fig:retroactive_restructuring} shows that such retroactive restructuring improves downstream QA. 
Disabling both \textit{split} and \textit{merge} freezes the structure, yields 0\% reassignment, and reduces average F1 to 38.59, whereas the full system reaches the highest reassignment ratio (44.91\%) and the best average F1 (43.98). 
The structure statistics in Appendix~\ref{app:reassign_details} further show that \textit{split} and \textit{merge} play complementary roles: \textit{split} enables revision by repartitioning broad groups, while \textit{merge} prevents the high-level index from becoming unnecessarily fragmented. 
These results support our claim that revisability is not merely an implementation detail, but a useful property for memory organisation under evolving evidence.

\subsection{Retrieval Efficiency and Cost Performance Trade-off}
\label{sec:analysis:efficiency_tradeoff}

\begin{figure}[t]
    \centering
    \includegraphics[width=\linewidth]{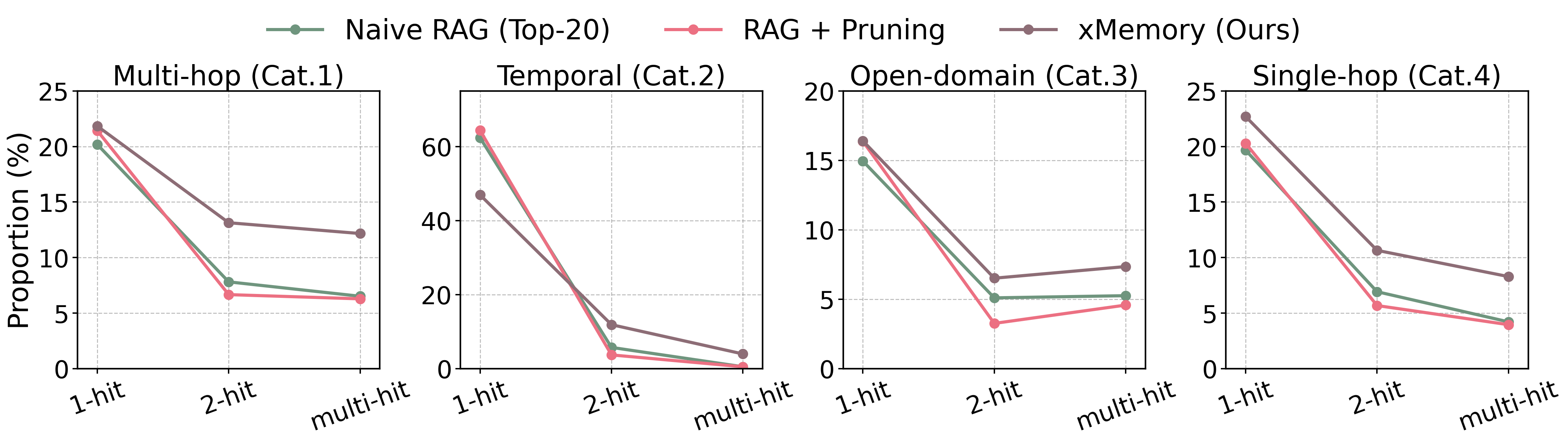}
\caption{Evidence hit distribution by question category. Each subplot shows the proportions of retrieved blocks that are 1-hit, 2-hit, or multi-hit, comparing Naive RAG, RAG with pruning, and \frameworkname. Higher 2-hit and multi-hit proportions indicate denser evidence retrieval.}
    \label{fig:hit-ratio-by-category}
\end{figure}

\paragraph{Evidence density and coverage efficiency.}
To understand why \frameworkname improves answer quality while reducing inference-time context cost, we compare three retrieval settings on LoCoMo with Qwen3-8B: (1) Naive RAG with top-$20$ chunks; (2) RAG with the LLMLingua-2 pruning module~\cite{pan-etal-2024-llmlingua} as in LightMem~\cite{fang2026lightmem}; and (3) \frameworkname. For each query, we remove stopwords from the reference answer and treat the remaining content words as answer evidence units. A retrieved block is counted as \textbf{1-hit}, \textbf{2-hit}, or \textbf{multi-hit} if it contains one, two, or at least three distinct answer evidence units, respectively. Figure~\ref{fig:hit-ratio-by-category} shows that \frameworkname retrieves denser evidence across all question categories, with consistently higher 2-hit and multi-hit proportions than both RAG baselines. This suggests that decoupling memory into finer-grained components helps concentrate answer-relevant evidence before retrieval, rather than leaving it dispersed across redundant raw chunks. By contrast, pruning shifts mass from 2-hit and multi-hit blocks toward 1-hit blocks, suggesting that post-retrieval compression can fragment entangled evidence even when the remaining context is broadly relevant. For example, on multi-hop questions, \frameworkname increases the 2-hit and multi-hit proportions to 13.14\% and 12.19\%, compared with 7.82\% and 6.53\% for Naive RAG.

\begin{wraptable}{r}{0.5\textwidth}
\centering
\vspace{-1.3em}
\small
\setlength{\tabcolsep}{3.8pt}
\renewcommand{\arraystretch}{1.10}
\caption{\textbf{Average evidence coverage efficiency on LoCoMo.}
Blocks denotes the average number of retrieved blocks needed to cover all answer evidence units, and Tokens denotes the corresponding token cost. Full category-level results are in Appendix~\ref{app:rag effect}.}
\label{tab:evidence_coverage_avg}
\resizebox{\linewidth}{!}{
\begin{tabular}{lrrrr}
\toprule
\textbf{Method} & \textbf{BLEU} & \textbf{F1} & \textbf{Blocks}$\downarrow$ & \textbf{Tokens}$\downarrow$ \\
\midrule
Naive RAG     & 27.95 & 36.48 & 10.81 & 1979.26 \\
RAG + Pruning & 26.55 & 34.58 & 13.31 & 1587.99 \\
\frameworkname & \textbf{34.48} & \textbf{43.98} & \textbf{5.66} & \textbf{974.56} \\
\bottomrule
\end{tabular}
}
\vspace{-1.0em}
\end{wraptable}

We further evaluate \emph{coverage efficiency} by comparing how many retrieved blocks and tokens are needed to cover all answer evidence units. Table~\ref{tab:evidence_coverage_avg} shows that pruning reduces token cost relative to Naive RAG, but requires more blocks and yields lower accuracy, consistent with fragmented evidence after compression. In contrast, \frameworkname achieves the best BLEU/F1 while covering answer evidence with substantially fewer blocks and tokens. This suggests that \frameworkname improves not simply by retrieving less context, but by concentrating more answer-relevant evidence into a smaller retrieval budget.

\begin{wrapfigure}{r}{0.5\textwidth}
    \centering
    \vspace{-1.0em}
    \includegraphics[width=\linewidth]{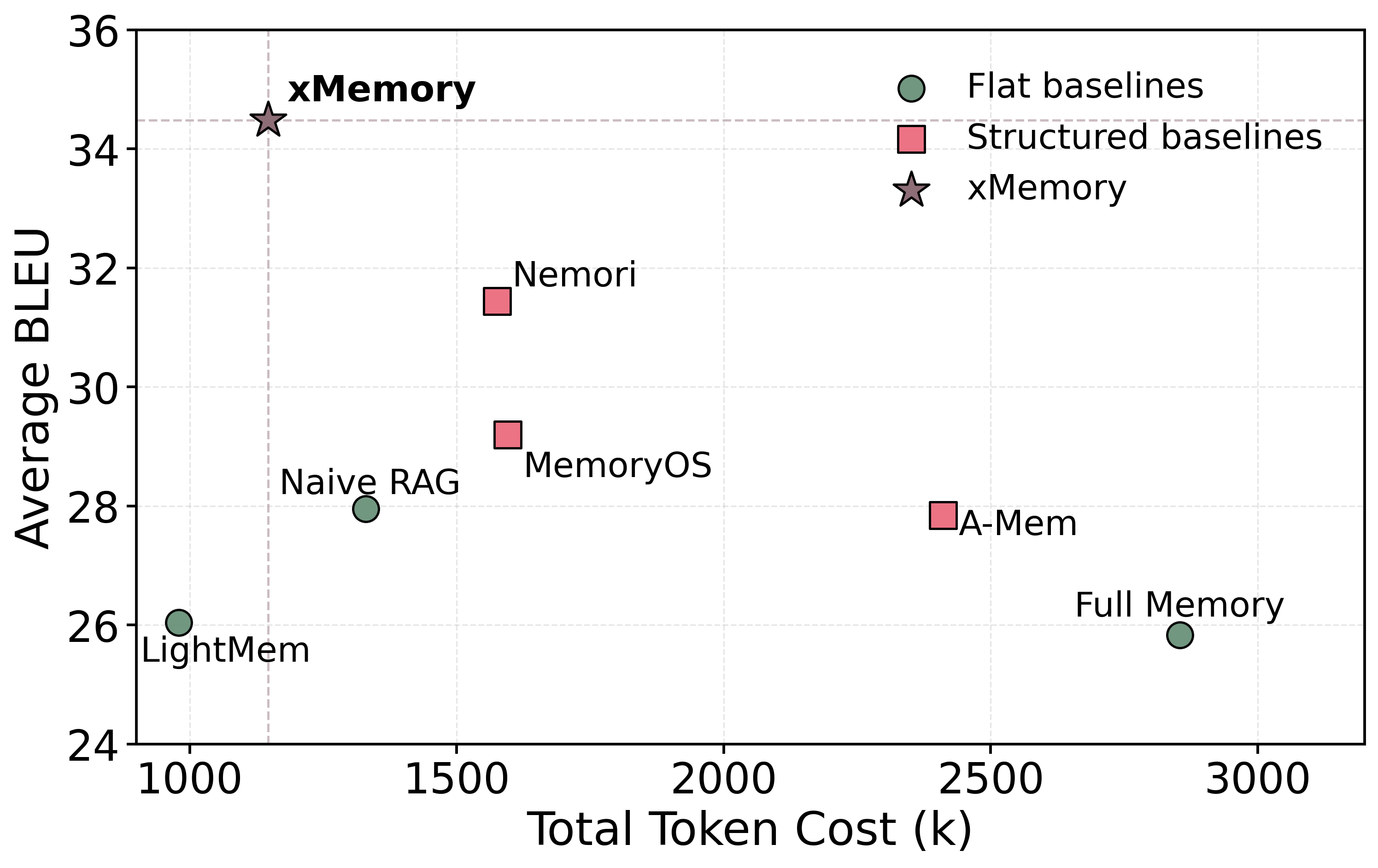}
    \caption{\textbf{Cost--performance trade-off on LoCoMo.}
    Total token cost is plotted against average BLEU with Qwen3-8B.}
    \label{fig:locomo_cost_tradeoff}
    \vspace{-1.0em}
\end{wrapfigure}

\paragraph{End-to-end cost--performance trade-off.}
The above analysis focuses on inference-time retrieval efficiency. We next examine whether this advantage remains after accounting for memory construction cost. Following the LightMem-style reporting setup~\cite{fang2026lightmem}, we calculate construction tokens and compute total token cost as average construction cost plus average inference tokens per query multiplied by the corresponding query count. The exact statistics used for this analysis are reported in Appendix~\ref{app:cost_perf_stats}.

As shown in Figure~\ref{fig:locomo_cost_tradeoff}, \frameworkname achieves the most favourable trade-off. Flat baselines such as Naive RAG and LightMem use relatively low or moderate token budgets, but obtain lower answer quality. Structured memory baselines improve over flat retrieval in some cases, but usually incur higher construction and inference costs. In contrast, \frameworkname attains the highest average BLEU while using a lower total token cost than Nemori, MemoryOS, and A-Mem. This suggests that the denser evidence retrieved by \frameworkname not only improves local coverage efficiency, but also translates into a better end-to-end efficiency--quality balance after amortising construction cost.

\section{Conclusion}
We presented \frameworkname, a retrieval framework that uses hierarchical organisation to address the
mismatch between agent memory and standard RAG assumptions. By disentangling highly correlated
memories into semantic components and retrieving top-down over the resulting hierarchy, xMemory
reduces redundancy while preserving temporally linked evidence. Across LoCoMo and PerLTQA,
xMemory improves answer quality with lower token cost and retrieves more evidence-dense contexts
than RAG baselines. These results suggest that effective agent memory should not only retrieve or
compress past interactions, but also reorganise them into evidence-oriented structures that make
subtle distinctions easier to access. We hope xMemory provides a useful step toward more adaptive
and efficient memory systems for long-horizon LLM agents.

\bibliographystyle{unsrtnat}
\bibliography{reference}

\clearpage

\newpage

\appendix

\hypersetup{linkcolor=black}
\section*{Appendix}
\noindent
\textbf{\hyperref[app:method]{A\quad Additional Method Details}} \dotfill \pageref{app:method}\\[2pt]
\hspace*{1.5em}A.1\quad \hyperref[app:construction]{From Raw Messages to Segments and Components} \dotfill \pageref{app:construction}\\[2pt]
\hspace*{1.5em}A.2\quad \hyperref[app:structure]{Detailed Structure Maintenance} \dotfill \pageref{app:structure}\\[2pt]
\hspace*{1.5em}A.3\quad \hyperref[app:retrieval]{Detailed Retrieval Implementation} \dotfill \pageref{app:retrieval}\\[2pt]
\hspace*{1.5em}A.4\quad \hyperref[app:pipeline]{Pseudo-code of the Full Pipeline} \dotfill \pageref{app:pipeline}\\[4pt]
\textbf{\hyperref[app:implementation_details]{B\quad Implementation Setup}} \dotfill \pageref{app:implementation_details}\\[4pt]
\textbf{C\quad More Experiment Results} \dotfill \pageref{app:ablation_details}\\[2pt]
\hspace*{1.5em}C.1\quad \hyperref[app:ablation_details]{Detailed Ablation Results} \dotfill \pageref{app:ablation_details}\\[2pt]
\hspace*{1.5em}C.2\quad \hyperref[app:fano_motivation]{Theoretical Motivation of Branching Factor} \dotfill \pageref{app:fano_motivation}\\[2pt]
\hspace*{1.5em}C.3\quad \hyperref[app:rag effect]{Evidence Density under RAG and Pruning Effect} \dotfill \pageref{app:rag effect}\\[2pt]
\hspace*{1.5em}C.4\quad \hyperref[app:reassign_details]{Retroactive Restructuring in Memory Construction} \dotfill \pageref{app:reassign_details}\\[2pt]
\hspace*{1.5em}C.5\quad \hyperref[app:cost_perf_stats]{Additional Cost--Performance Statistics} \dotfill \pageref{app:cost_perf_stats}\\[4pt]
\textbf{\hyperref[app:case_study]{D\quad Case Study: Decoupling before Aggregation}} \dotfill \pageref{app:case_study}\\[4pt]
\textbf{\hyperref[app:limitations]{E\quad Limitations}} \dotfill \pageref{app:limitations}\\[4pt]
\textbf{\hyperref[app:broader_impact]{F\quad Broader Impact Statement}} \dotfill \pageref{app:broader_impact}\\[4pt]
\textbf{\hyperref[app:prompt design]{G\quad Prompt Design}} \dotfill \pageref{app:prompt design}
\hypersetup{linkcolor=red}
\newpage

\section{Additional Method Details}
\label{app:method}

This appendix provides implementation details omitted from the main text, including segment construction, memory component extraction, dynamic structure maintenance, and retrieval-time uncertainty estimation.

\subsection{From Raw Messages to Segments and Components}
\label{app:construction}

\paragraph{Segment construction.}
The interaction history is first partitioned into contiguous \emph{segments}, each intended to capture a coherent local event. In practice, segmentation is performed incrementally over the message stream. A new incoming message is attached to the current segment when it remains part of the same local interaction event, and starts a new segment otherwise. This decision is based on local semantic continuity, including topical consistency, temporal continuity, and explicit discourse shifts. The resulting segments preserve intact contextual evidence that may later be needed for reconstruction.

\paragraph{Memory component extraction.}
Given a segment, \frameworkname uses an LLM to extract one or more \emph{memory components}. Each component is a concise structured unit representing a reusable fact, constraint, attribute, preference, relation, or state update expressed in the segment. The extraction prompt instructs the LLM to decompose a segment into minimal evidence units that are likely to be useful for future question answering, while avoiding unnecessary duplication across components from the same segment.

Each component is stored together with (1) its textual description, (2) an embedding for retrieval and grouping, and (3) a pointer to its source segment. This design allows the memory structure to operate over fine-grained evidence units while retaining access to the original local context.

\paragraph{Component granularity.}
We aim for components that are finer-grained than segments but still semantically self-contained. In particular, a component should isolate a single reusable piece of evidence whenever possible, rather than conflate multiple loosely related facts. This granularity is important because highly similar segments often differ only in a small but answer-critical detail. Extracting such details as separate components makes them directly retrievable at the high level.

\subsection{Detailed Structure Maintenance}
\label{app:structure}

The main text introduces a guidance objective for organising components into groups and states that the memory structure is maintained incrementally. We provide more details here.

\paragraph{Attach operation.}
Suppose a new memory component with embedding $\mathbf{x}$ arrives. Let $\boldsymbol{\mu}_k$ denote the centroid embedding of group $k$. We first identify the nearest group
\[
k^\star = \arg\max_{1 \le k \le K} \cos(\mathbf{x}, \boldsymbol{\mu}_k).
\]
If the similarity to this group exceeds a threshold $\tau_{\mathrm{attach}}$, the component is attached to group $k^\star$; otherwise a new group is created. The threshold controls whether the current organisation is considered sufficiently expressive for the incoming evidence.

\paragraph{Split triggers.}
A group is considered for splitting when it becomes too broad to serve as a useful high-level access unit. In our implementation, this happens when the group size exceeds a predefined threshold. We generate candidate partitions using graph-based local clustering on component embeddings: components are connected if their pairwise similarity exceeds a fixed threshold, and the resulting connected components form candidate clusters. A size-based fallback partition is used when necessary. Among these candidate partitions, we select the split that yields the largest improvement in Eq.~\eqref{eq:guidance}.

\paragraph{Merge triggers.}
A group is considered for merging when it contains only a single memory component and is therefore unlikely to provide a stable high-level access unit on its own. Candidate merge targets are drawn from its nearest neighbouring groups in centroid space, and the merge with the largest improvement in Eq.~\eqref{eq:guidance} is selected.

\paragraph{Maintenance schedule.}
The attach operation is applied online whenever a new component is created. Split and merge are applied periodically, or when the arrival of new components causes local structural statistics to cross the corresponding thresholds. This makes the memory structure revisable rather than fixed: later observations can reorganise earlier evidence when better relations become apparent.

\paragraph{kNN link updates.}
In addition to group membership, \frameworkname maintains top-$k$ nearest-neighbour links among groups and components in embedding space. These links are updated whenever new components are added and whenever split or merge operations alter the local structure. To avoid global recomputation after every update, kNN maintenance can be restricted to the affected neighbourhood.

\paragraph{Complexity.}
The component-to-group organisation keeps high-level retrieval efficient by reducing the number of units examined at coarse stages. Dynamic maintenance introduces additional cost, but this cost is amortised over memory updates and is confined to local structural revisions rather than full reconstruction of the memory structure. In practice, attach is inexpensive, while split and merge are triggered much less frequently.

\subsection{Detailed Retrieval Implementation}
\label{app:retrieval}

This section provides details omitted from the main text for candidate generation, backbone selection, and uncertainty-based expansion.

\paragraph{Initial candidate generation.}
Given a query $q$, we first embed the query and retrieve a candidate set of groups and memory components from the memory structure. Candidate generation is performed independently at the group level and the component level using embedding similarity. The union of these candidates forms the high-level candidate set $V$ used in Stage~I. This design allows retrieval to enter the search space both through coarse semantic regions and through directly matched fine-grained evidence units.

\paragraph{Backbone selection.}
Stage~I selects a compact set of complementary high-level units by balancing query relevance and structural coverage, as defined in Eq.~\eqref{eq:rep_select}. In practice, the procedure is greedy. At each step, the system recomputes the marginal gain of each remaining candidate under the current selected set $R$, adds the best-scoring candidate, and terminates when the candidate set is exhausted.

The coverage term is defined over local kNN neighbourhoods rather than the full graph, which keeps the computation sparse and prevents the selected set from collapsing into a single dense region of near-duplicate evidence.

\paragraph{Hierarchy-aware refinement.}
Although Eq.~\eqref{eq:rep_select} is written over a generic candidate set, we apply it hierarchically in practice. We first select a compact set of groups, then restrict attention to the components associated with those groups, and finally apply the same selection principle again to obtain the component-level evidence backbone. This reduces search cost and encourages diversity at both levels.

\paragraph{Segment expansion.}
Each selected memory component points back to its source segment. Stage~II first gathers the linked segments of the selected components and forms a coarse context consisting of the selected groups, component descriptions, and candidate segments. Segments are then considered according to their marginal uncertainty reduction. A segment is added only when it further reduces the reader's uncertainty under the current context.

\paragraph{Message expansion.}
Once a segment has been admitted, its constituent messages become eligible for finer-grained expansion. This allows the system to reveal only the most useful parts of a segment rather than always including the full span. Message-level expansion follows the same principle: a message is added only when it provides additional uncertainty reduction beyond the current context.

\paragraph{Uncertainty estimation.}
The main text uses $U(C,q)$ to denote the reader's uncertainty under context $C$ and query $q$. In our implementation, when the reader model exposes token-level predictive distributions, we instantiate $U(C,q)$ as the entropy of the next-token answer distribution under the current context. This provides a scalar proxy for how uncertain the reader remains about the answer after seeing $C$.

When the reader model does not expose token-level predictive uncertainty, we estimate $U(\cdot)$ using a proxy model that provides token-level logits. The proxy model receives the same query and context, and its predictive entropy is used only for expansion decisions; the final answer is still generated by the designated reader model. This separation allows the retrieval policy to remain uncertainty-aware even for black-box or API-based readers.

\paragraph{Stopping rule.}
Stage~II terminates when no remaining candidate segment or message yields positive marginal uncertainty reduction. This prevents unnecessary expansion into redundant raw text after the reader's uncertainty can no longer be further reduced by available candidates.

\subsection{Pseudo-code of the Full Pipeline}
\label{app:pipeline}

For completeness, we summarise the full pipeline below.

\begin{algorithm}[H]
\caption{\frameworkname}
\label{alg:xmemory}
\small
\KwIn{Interaction history $H=\{m_1,\dots,m_T\}$, query $q$, token budget $B$}
\KwOut{Retrieved context $C$}

Partition $H$ into segments\;
Extract memory components from each segment\;
Organise components into groups and build kNN links\;

\ForEach{new component arrival}{
    Attach to nearest compatible group or create a new group\;
    \If{a group becomes too large or incoherent}{
        Split using Eq.~\eqref{eq:guidance}\;
    }
    \If{a group becomes too small or isolated, such as only including one memory component}{
        Merge using Eq.~\eqref{eq:guidance}\;
    }
}

Retrieve candidate groups and components for query $q$\;
Select a compact high-level evidence backbone using Eq.~\eqref{eq:rep_select}\;

Initialise coarse context $C$ from selected groups and components\;
Gather candidate linked segments\;

\ForEach{candidate segment $s$ in ranked order}{
    \If{$\Delta U(s\mid C,q)> 0 $}{
        Add $s$ to $C$\;
        \ForEach{message $m$ in $s$}{
            \If{$\Delta U(m\mid C,q)> 0 $}{
                Add $m$ to $C$\;
            }
        }
    }
}

\Return{$C$}\;
\end{algorithm}

\section{Implementation Setup}
\label{app:implementation_details}

\paragraph{\frameworkname hyperparameters.}
Unless otherwise specified, we use a group size cap of $12$ memory components, an attachment threshold of $0.6$, a post-split clustering threshold of $0.65$, and a merge threshold of $0.65$ across all datasets and backbone models. The attachment threshold controls whether a newly created component is assigned to an existing group, while the post-split clustering threshold controls the granularity of groups formed after a split operation. The merge threshold controls when a small or weakly separated group should be merged into a neighbouring group. Additionally, following prior structured retrieval~\cite{zhang2025leanrag, sarthi2024raptor} and agent-memory systems~\cite{xu2025amem}, we set the kNN neighbourhood size to $10$ and the Stage~I candidate pool size to $20$. Experiments with open-source models are run on an NVIDIA A100 80GB GPU.

\paragraph{Hyperparameter sensitivity.}
We evaluate the sensitivity of \frameworkname to the main structure-update thresholds on LoCoMo using Qwen3-8B for both memory construction and answer generation. For each experiment, we vary one threshold while keeping all other hyperparameters fixed. As shown in Table~\ref{tab:threshold_sensitivity}, the default attachment threshold of $0.6$, post-split clustering threshold of $0.65$, and merge threshold of $0.65$ achieve the best average BLEU and F1 in their respective sensitivity groups. The method remains relatively stable across nearby values, suggesting that its performance does not depend on a narrow threshold choice. Lower attachment thresholds may attach components too aggressively, while higher thresholds make the structure more conservative. Similarly, lower post-split thresholds may produce overly coarse groups, whereas higher thresholds may over-fragment related components. For merging, both lower and higher thresholds lead to weaker average performance, indicating that an intermediate value better preserves the balance between structural compactness and semantic separation.

\begin{table*}[t]
\centering
\small
\setlength{\tabcolsep}{3.5pt}
\renewcommand{\arraystretch}{1.08}
\caption{Sensitivity analysis of the main structure-update thresholds on LoCoMo using Qwen3-8B. We vary one threshold at a time while keeping all other hyperparameters fixed. Results are reported as BLEU/F1, and the default values are marked in gray.}
\label{tab:threshold_sensitivity}
\resizebox{\textwidth}{!}{%
\begin{tabular}{llccccc}
\toprule
\textbf{Threshold} & \textbf{Value} 
& \textbf{Multi-hop} 
& \textbf{Temporal} 
& \textbf{Open-domain} 
& \textbf{Single-hop} 
& \textbf{Average} \\
\midrule

\multirow{4}{*}{Attachment}
& $0.5$ & 25.36 / 37.29 & 28.13 / 35.38 & 14.97 / 19.12 & 39.86 / 49.65 & 33.21 / 42.51 \\
& \cellcolor{gray!12}$0.6$ & \cellcolor{gray!12}27.24 / 38.57 & \cellcolor{gray!12}29.58 / 37.46 & \cellcolor{gray!12}15.55 / 20.69 & \cellcolor{gray!12}40.94 / 50.94 & \cellcolor{gray!12}34.48 / 43.98 \\
& $0.7$ & 25.69 / 38.02 & 27.56 / 35.71 & 15.27 / 20.02 & 40.09 / 49.81 & 33.29 / 42.85 \\
& $0.8$ & 24.95 / 36.66 & 27.35 / 35.65 & 14.20 / 18.43 & 39.56 / 49.42 & 32.76 / 42.28 \\
\midrule

\multirow{3}{*}{Post-split clustering}
& $0.55$ & 26.82 / 37.91 & 28.82 / 36.78 & 14.59 / 20.06 & 39.10 / 49.20 & 33.18 / 42.73 \\
& \cellcolor{gray!12}$0.65$ & \cellcolor{gray!12}27.24 / 38.57 & \cellcolor{gray!12}29.58 / 37.46 & \cellcolor{gray!12}15.55 / 20.69 & \cellcolor{gray!12}40.94 / 50.94 & \cellcolor{gray!12}34.48 / 43.98 \\
& $0.75$ & 26.80 / 38.22 & 28.85 / 36.27 & 14.06 / 19.79 & 39.86 / 50.51 & 33.57 / 43.37 \\
\midrule

\multirow{4}{*}{Merge}
& $0.6$ & 26.34 / 37.21 & 28.65 / 36.51 & 14.88 / 19.40 & 39.70 / 49.21 & 33.40 / 42.51 \\
& \cellcolor{gray!12}$0.65$ & \cellcolor{gray!12}27.24 / 38.57 & \cellcolor{gray!12}29.58 / 37.46 & \cellcolor{gray!12}15.55 / 20.69 & \cellcolor{gray!12}40.94 / 50.94 & \cellcolor{gray!12}34.48 / 43.98 \\
& $0.7$ & 26.05 / 37.10 & 28.23 / 35.78 & 15.19 / 19.98 & 39.34 / 48.80 & 33.08 / 42.14 \\
& $0.75$ & 25.43 / 36.16 & 27.83 / 35.37 & 14.59 / 19.68 & 39.19 / 48.21 & 32.77 / 41.55 \\

\bottomrule
\end{tabular}
}
\end{table*}

\paragraph{Existing assets and licenses.} We use existing datasets, models, and baseline methods only for research evaluation. LoCoMo and PerLTQA are credited through their original papers in Section~4.1. PerLTQA is released under the CC BY-NC 4.0 license for non-commercial research use. LoCoMo is released with its accompanying data and code repository; we follow the terms provided by the dataset authors. We use Qwen3-8B, Llama-3.1-8B-Instruct, GPT-5 nano, GPT-4.1-mini, and \texttt{text-embedding-3-small} according to their respective model licenses or API terms. Baseline methods are credited through their original papers, and any reused code in the released package follows the corresponding licenses and attribution requirements.

\section{More Experiment Results}

\subsection{Detailed Ablation Results}
\label{app:ablation_details}

Table~\ref{tab:locomo_ablation_full} reports the full category-level ablation results for the retrieval design on LoCoMo with Qwen3-8B. 
The results confirm that memory structuring alone already provides a substantially stronger retrieval basis than Naive RAG, and that both Stage~I and Stage~II contribute further improvements from complementary directions. 
Stage~I mainly improves high-level evidence selection while reducing token usage, whereas Stage~II contributes stronger gains in answer quality by uncertainty-guided expansion over segments and messages. 
The full system achieves the best overall trade-off across performance and inference-time token cost.

\begin{table*}[t]
\centering
\small
\setlength{\tabcolsep}{4.2pt}
\renewcommand{\arraystretch}{1.12}
\caption{\textbf{Ablation on LoCoMo with Qwen3-8B.}
We report BLEU, F1, and Token/query (lower is better).
\textbf{Memory-only} uses the full memory structure but replaces both retrieval stages with basic similarity retrieval.
\textbf{+Stage I} adds group and component selection.
\textbf{+Stage II} adds uncertainty-based segment and message expansion.
\textbf{Ours (Full)} combines both stages.}
\label{tab:locomo_ablation_full}
\resizebox{\linewidth}{!}{
\begin{tabular}{lrrrrrrrrrrr}
\toprule
\textbf{Setting} &
\multicolumn{2}{c}{\textbf{Multi-hop}} &
\multicolumn{2}{c}{\textbf{Temporal}} &
\multicolumn{2}{c}{\textbf{Open-domain}} &
\multicolumn{2}{c}{\textbf{Single-hop}} &
\multicolumn{2}{c}{\textbf{Average}} &
\multicolumn{1}{c}{\textbf{Token}} \\
\cmidrule(lr){2-3}\cmidrule(lr){4-5}\cmidrule(lr){6-7}\cmidrule(lr){8-9}\cmidrule(lr){10-11}
& \textbf{BLEU} & \textbf{F1}
& \textbf{BLEU} & \textbf{F1}
& \textbf{BLEU} & \textbf{F1}
& \textbf{BLEU} & \textbf{F1}
& \textbf{BLEU} & \textbf{F1}
& \textbf{/query} \\
\midrule
Naive RAG             & 22.46 & 34.28 & 17.21 & 21.32 & 12.35 & 17.08 & 35.68 & 45.22 & 27.95 & 36.48 & 8633.28 \\
Memory-only           & 24.48 & 35.41 & 27.28 & 34.14 & 13.20 & 17.75 & 38.19 & 47.73 & 31.81 & 40.77 & 7235.56 \\
+Stage I              & 26.55 & 38.02 & 28.08 & 35.72 & 14.30 & 19.75 & 38.36 & 45.79 & 32.55 & 40.64 & 6320.72 \\
+Stage II             & 26.64 & 37.03 & 28.92 & 37.09 & 14.10 & 20.43 & 39.28 & 49.01 & 33.23 & 42.55 & 6556.58 \\
\frameworkname (Full) & \textbf{27.24} & \textbf{38.57} & \textbf{29.58} & \textbf{37.46} & \textbf{15.55} & \textbf{20.69} & \textbf{40.94} & \textbf{50.94} & \textbf{34.48} & \textbf{43.98} & \textbf{4711.29} \\
\bottomrule
\end{tabular}
}
\end{table*}

\subsection{Theoretical Motivation of Branching Factor in Hierarchical Dialogue Summarization}
\label{app:fano_motivation}

\subsubsection{Motivation and Fano Inequality}

Our hierarchy performs top-down retrieval, in which key subroutines repeatedly \emph{route} to a relevant unit by selecting from a candidate set (e.g., a group/memory component relevant to the current query context). In high-similarity dialogue streams, the observable signal used for routing (summaries/embeddings/LLM scores) can be inherently weak at discriminating among many near-duplicate candidates. This subsection summarises a standard information-theoretic implication: with bounded discriminative information, the error of a multi-way routing decision cannot be made arbitrarily small as the candidate set grows.

Consider one such routing decision that must identify the correct option among $n_k$ candidates within a group on the component--group level. Let $Z\in\{1,\dots,n_k\}$ be the (unknown) index of the correct candidate, and let $O$ denote the observable evidence (e.g., the group summary derived from its constituent components and the query/context representations) used for routing. Any routing rule outputs $\hat Z(O)$ with error probability $p_e = \Pr[\hat Z \neq Z]$. A classical result (Fano's inequality) implies

\begin{equation}
p_e \;\ge\; 1 - \frac{I(Z;O)+1}{\log_2 n_k},
\label{eq:fano}
\end{equation}

where $I(Z;O)$ denotes the mutual information between the routing evidence $O$ (derived from the query/context and stored representations) and the correct candidate index $Z$. In our setting, candidates within the same group are often near-duplicates, and the evidence is produced from compressed LLM representations (with additional noise from imperfect assignments). As a result, the \emph{discriminative} information available for distinguishing among candidates is bounded, i.e., $I(Z;O)$ is small. By Fano's inequality, a bounded $I(Z;O)$ implies a non-trivial lower bound on the routing error that increases with $\log n_k$. Therefore, to achieve a low misrouting rate (small $p_e$), it is necessary to control the candidate set size $n_k$. This aligns with the practical intuition: when many candidates are highly similar, identifying the source becomes close to guessing unless we reduce the arity of the routing problem.

\subsubsection{Connection to Metric Designing}

Equation~\eqref{eq:fano} shows that if $I(Z;O)$ is bounded (as expected when many candidates are highly similar), then increasing $n_k$ necessarily increases a lower bound on the misrouting probability. 

In our group partition $P=\{C_k\}_{k=1}^K$ of $N$ component nodes with $n_k = |C_k|$, the typical within-group candidate set size directly governs the arity of routing within a group. Eq.~(2) estimates the expected within-group scanning cost as
$E[n_{\mathrm{cluster}}] = \frac{1}{N}\sum_{k=1}^K n_k^2$,
and defines
$\mathrm{SparsityScore}(P) = \frac{N^2}{K\sum_{k=1}^K n_k^2}$.
Maximising $\mathrm{SparsityScore}(P)$ thus controls the typical within-group candidate size (and prevents extremely large groups), which is precisely the regime where the lower bound in Eq.~\eqref{eq:fano} becomes prohibitive. This provides a theoretical motivation for balancing group sizes: it limits unavoidable routing errors under bounded discriminative evidence in high-similarity dialogue memory.

\begin{table*}[t]
\centering
\small
\setlength{\tabcolsep}{3.9pt}
\renewcommand{\arraystretch}{1.12}
\caption{\textbf{Effect of the group size cap on LoCoMo (Qwen3-8B).}
We vary the upper bound of memory component nodes per group and report BLEU/F1 by question category and the average, together with resulting structure statistics.}
\label{tab:theme_cap_sweep}
\resizebox{\textwidth}{!}{
\begin{tabular}{lrrrrrrrrrrrrr}
\toprule
\textbf{Upper bound} &
\multicolumn{2}{c}{\textbf{Multi-hop}} &
\multicolumn{2}{c}{\textbf{Temporal}} &
\multicolumn{2}{c}{\textbf{Open-domain}} &
\multicolumn{2}{c}{\textbf{Single-hop}} &
\multicolumn{2}{c}{\textbf{Average}} &
\multicolumn{1}{c}{\makecell{\textbf{\#}\\\textbf{Com.}}} &
\multicolumn{1}{c}{\makecell{\textbf{\#}\\\textbf{Groups}}} &
\multicolumn{1}{c}{\makecell{\textbf{Avg. com.}\\\textbf{per group}}} \\
\cmidrule(lr){2-3}\cmidrule(lr){4-5}\cmidrule(lr){6-7}\cmidrule(lr){8-9}\cmidrule(lr){10-11}
& \textbf{BLEU} & \textbf{F1}
& \textbf{BLEU} & \textbf{F1}
& \textbf{BLEU} & \textbf{F1}
& \textbf{BLEU} & \textbf{F1}
& \textbf{BLEU} & \textbf{F1}
& & & \\
\midrule
14 & 26.31 & 37.44 & 28.67 & 36.33 & 13.78 &18.41 & 39.43 & 48.57 & 33.19 & 42.10	 &2861  & 383  & 7.47 \\
12 & \textbf{27.24} & \textbf{38.57} & \textbf{29.58} & \textbf{37.46} & \textbf{15.55} & \textbf{20.69} & \textbf{40.94} & \textbf{50.94} & \textbf{34.48} & \textbf{43.98} & 2879 & 642  & 4.48 \\
10 & 26.52	 & 37.62 & 28.93 & 36.78 & 14.12 & 18.02 & 39.31 & 49.66 & 33.23 & 42.80 & 2932 & 805  & 3.64 \\
8  & 25.01	& 36.31 & 27.46 &35.24 & 12.91 & 17.82 &38.49  & 47.81 & 32.13 & 41.21 & 2862 & 1111 &  2.58\\
\bottomrule
\end{tabular}
}
\end{table*}

\subsubsection{Fano-style Lower Bound and optimal $n_k$ in hierarchical structure.}
\label{app:k-proof}

In this subsection, we use the Fano-style lower bound to guide us to find the optimal $n_k$.

\begin{theorem}
Let $Z \in \{1,\dots,n_k\}$ denote the (unknown) correct index and let $O$ be the observable evidence used to infer $Z$.
For any estimator $\hat Z = g(O)$ with error probability $p_e = \Pr[\hat Z \neq Z]$,
\begin{equation}
H(Z\mid O) \le h(p_e) + p_e \log_2(n_k-1),
\label{eq:fano_standard}
\end{equation}
where $h(\cdot)$ is the binary entropy function.
In particular, if $Z$ is uniform on $\{1,\dots,k\}$, then
\begin{equation}
p_e \ge 1 - \frac{I(Z;O)+1}{\log_2 n_k}.
\label{eq:fano_simplified}
\end{equation}
\end{theorem}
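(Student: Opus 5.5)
The plan is the textbook proof of Fano's inequality via an error indicator and the chain rule for entropy, followed by a short algebraic specialisation to the uniform case. Note that the statement writes ``uniform on $\{1,\dots,k\}$''; I read this as $\{1,\dots,n_k\}$, which is the intended support of $Z$.

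\textbf{Step 1: introduce the error indicator.} Define $E=\mathbf{1}[\hat Z\neq Z]$, so that $\Pr[E=1]=p_e$. Because $\hat Z=g(O)$ is a deterministic function of $O$, expanding $H(E,Z\mid O)$ by the chain rule in two orders gives
\[
H(Z\mid O)+H(E\mid Z,O)=H(E\mid O)+H(Z\mid E,O).
\]

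\textbf{Step 2: bound each term.}
\begin{itemize}
\item $H(E\mid Z,O)=0$, because $E$ is determined by $Z$ and $g(O)$.
\item $H(E\mid O)\le H(E)=h(p_e)$, since conditioning does not increase entropy.
\item Split $H(Z\mid E,O)$ according to the value of $E$:
\[
H(Z\mid E,O)=\Pr[E=0]\,H(Z\mid E=0,O)+p_e\,H(Z\mid E=1,O).
\]
\begin{itemize}
\item When $E=0$, we have $Z=g(O)$, so $H(Z\mid E=0,O)=0$.
\item When $E=1$, $Z$ ranges over the $n_k-1$ values different from $g(O)$, so $H(Z\mid E=1,O)\le\log_2(n_k-1)$ by the maximum-entropy bound on a finite set.
\end{itemize}
\end{itemize}
Substituting these bounds gives \eqref{eq:fano_standard}.

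\textbf{Step 3: specialise to the uniform case.} If $Z$ is uniform, then $H(Z)=\log_2 n_k$, hence
\[
H(Z\mid O)=\log_2 n_k-I(Z;O).
\]
Now weaken the right-hand side of \eqref{eq:fano_standard} using $h(p_e)\le 1$ and $\log_2(n_k-1)\le\log_2 n_k$. This yields
\[
\log_2 n_k-I(Z;O)\le 1+p_e\log_2 n_k.
\]
For $n_k\ge 2$, dividing by $\log_2 n_k>0$ and rearranging gives \eqref{eq:fano_simplified}.

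\textbf{Main obstacle.} The only delicate step is the conditional bound $H(Z\mid E=1,O)\le\log_2(n_k-1)$. It must be argued pointwise for each value $o$ of $O$: conditional on $O=o$ and $E=1$, the support of $Z$ excludes $g(o)$, and we then average over $o$.

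Two edge cases need explicit handling. If $n_k=1$, then $\log_2(n_k-1)$ is undefined, but $p_e=0$ and $H(Z\mid O)=0$, so \eqref{eq:fano_standard} holds trivially under the convention $0\cdot\log 0=0$. In the same case \eqref{eq:fano_simplified} is vacuous or undefined, so it should be stated only for $n_k\ge 2$. Apart from this, the argument uses only basic entropy identities and no earlier result of the paper.
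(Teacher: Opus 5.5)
Your proposal is correct and follows the paper's argument. The paper simply cites the first inequality as the standard form of Fano's inequality, whereas you supply its usual error-indicator proof. It then derives the uniform-case bound exactly as you do, via $H(Z\mid O)=H(Z)-I(Z;O)$, the weakenings $h(p_e)\le 1$ and $\log_2(n_k-1)\le\log_2 n_k$, and $H(Z)=\log_2 n_k$; your correction of $\{1,\dots,k\}$ to $\{1,\dots,n_k\}$ and the $n_k\ge 2$ caveat are sensible refinements the paper leaves implicit.
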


\begin{proof}
Equation~\eqref{eq:fano_standard} is the standard form of Fano's inequality.
Using $I(Z;O) = H(Z) - H(Z\mid O)$ and rearranging yields
\[
H(Z) - I(Z;O) \le h(p_e) + p_e \log_2(n_k-1).
\]
Since $h(p_e) \le 1$ and $\log_2(n_k-1) \le \log_2 n_k$, we have
\[
H(Z) - I(Z;O) \le 1 + p_e \log_2 n_k.
\]
If $Z$ is uniform, then $H(Z)=\log_2 n_k$, hence
\[
\log_2 n_k - I(Z;O) \le 1 + p_e \log_2 n_k
\quad\Rightarrow\quad
p_e \ge 1 - \frac{I(Z;O)+1}{\log_2 n_k},
\]
which is~\eqref{eq:fano_simplified}.
\end{proof}

\begin{corollary}[Admissible candidate set size under bounded discriminability]
\label{cor:k_cap}
Assume the routing evidence has bounded discriminative information $I(Z;O)\le B$ (in bits).
If we require $p_e \le \varepsilon$ for some $\varepsilon \in (0,1)$, then any feasible $n_k$ must satisfy
\begin{equation}
\log_2 n_k \le \frac{B+1}{1-\varepsilon}
\quad\Longrightarrow\quad
n_k \le 2^{\frac{B+1}{1-\varepsilon}}.
\label{eq:k_cap}
\end{equation}
\end{corollary}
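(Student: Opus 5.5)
The corollary follows directly from the simplified Fano bound \eqref{eq:fano_simplified} in the preceding theorem, so I would simply combine that lower bound on $p_e$ with the requirement $p_e \le \varepsilon$ and solve for $n_k$. Concretely, I would assume (as in the theorem) that the correct index $Z$ is uniform on $\{1,\dots,n_k\}$, so that \eqref{eq:fano_simplified} applies to any routing rule $\hat Z(O)$. Chaining the inequalities gives
\[
1 - \frac{I(Z;O)+1}{\log_2 n_k} \;\le\; p_e \;\le\; \varepsilon .
\]

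The key steps, in order, are as follows.

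First, I would dispose of the degenerate case $n_k = 1$. There $\log_2 n_k = 0$, so \eqref{eq:fano_simplified} is vacuous and cannot be divided through. However, the claimed bound $\log_2 n_k \le (B+1)/(1-\varepsilon)$ holds trivially, since the right-hand side is positive. The same applies whenever $\log_2 n_k \le B+1$, which already implies the claim because $1-\varepsilon \in (0,1)$.

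Second, for $n_k \ge 2$, I would multiply the chained inequality by $\log_2 n_k > 0$. This gives $(1-\varepsilon)\log_2 n_k \le I(Z;O) + 1$.

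Third, I would insert the assumption $I(Z;O) \le B$ to get $(1-\varepsilon)\log_2 n_k \le B+1$. Dividing by $1-\varepsilon > 0$, which uses $\varepsilon < 1$, yields $\log_2 n_k \le (B+1)/(1-\varepsilon)$. Exponentiating base 2, which is monotone, gives $n_k \le 2^{(B+1)/(1-\varepsilon)}$, which is \eqref{eq:k_cap}.

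There is no real technical obstacle here; the only points needing care are bookkeeping. The first is to state explicitly that the uniform-prior hypothesis of the theorem is inherited. (The theorem's ``uniform on $\{1,\dots,k\}$'' should be read as $\{1,\dots,n_k\}$.) Without the uniform prior, $H(Z) < \log_2 n_k$ and the bound would have to be phrased with $H(Z)$ in place of $\log_2 n_k$. The second is the sign condition $1-\varepsilon > 0$ used when dividing. I would add a remark that the statement should be read as a necessary condition: any rule achieving error at most $\varepsilon$ forces this cap on $n_k$. It does not claim that every $n_k$ below the cap is achievable.
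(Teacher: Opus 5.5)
Your proof is correct and follows the paper's route exactly: the paper's one-line proof rearranges the simplified Fano bound \eqref{eq:fano_simplified} and substitutes $I(Z;O)\le B$. Your extra bookkeeping is a genuine tightening of that argument: the $n_k=1$ case, the sign of $1-\varepsilon$, and stating explicitly that the uniform prior on $Z$ is inherited (without it the cap really can fail, e.g.\ for a deterministic $Z$).
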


\begin{proof}
Rearrange~\eqref{eq:fano_simplified} and substitute $I(Z;O)\le B$.
\end{proof}

To instantiate Corollary~\ref{cor:k_cap}, we need a conservative upper bound $B$ on the mutual information $I(Z;O)$ between the routing evidence and the correct candidate index. Within a group of near-duplicate components, the summary signal $O$ carries only a few bits of discriminative information beyond what the components share; we therefore adopt $B=2$ bits, an upper-bounding assumption that lets an oracle perfectly distinguish at most $2^B=4$ candidates without further side information. With a routing accuracy target of $\alpha = 1-\varepsilon = 0.85$, Corollary~\ref{cor:k_cap} yields
\begin{equation}
n_k \;\le\; 2^{(B+1)/\alpha} \;=\; 2^{3.529} \;\approx\; 11.5, \nonumber
\end{equation}
which we round to $n_k^{\max}=12$ as the split threshold in our implementation.

This information-theoretic prediction is corroborated empirically. Table~\ref{tab:theme_cap_sweep} sweeps the cap from $8$ to $14$ on LoCoMo with Qwen3-8B and shows that $n_k^{\max}=12$ achieves the best BLEU/F1 in \emph{every} question category and the best average ($34.48/43.98$). Caps above $12$ (e.g., $14$) reduce performance because larger groups exceed the routing-discriminability budget predicted by Fano, while caps below $12$ (e.g., $10$ or $8$) over-fragment the hierarchy and weaken multi-fact coverage.

The actual average branching factor under the cap of $12$ is $\approx 4.5$ components per group. The cap is a ceiling rather than a target: the SparsityScore + SemScore objective in Eq.~\eqref{eq:guidance} produces a distribution of group sizes well concentrated below $n_k^{\max}$, with only the largest groups approaching the Fano-admissible limit. The two mechanisms are complementary: Fano supplies a worst-case ceiling that prevents pathologically large groups, while the objective handles fine-grained sizing within the safe regime.

\subsection{Full Performance in Evidence Density under RAG and Pruning Effect}
\label{app:rag effect}

Table~\ref{tab:app_evidence_density_coverage} provides the detailed performance and coverage statistics for the three settings on LoCoMo with Qwen3-8B.
RAG with pruning shows mixed behaviour across categories: while it slightly improves BLEU on multi-hop and open-domain questions, it reduces F1 and consistently underperforms Naive RAG on single-hop and the overall average.
This is consistent with Figure~\ref{fig:hit-ratio-by-category}, where pruning shifts retrieved blocks from 2-hit and multi-hit toward 1-hit, suggesting that compression often discards answer-bearing details.
In contrast, \frameworkname improves both BLEU and F1 in every category, with particularly large gains on temporal and single-hop questions, and achieves the highest overall average.
It also covers answer evidence more efficiently, requiring fewer blocks for full coverage (5.66 vs.\ 10.81 and 13.31) and substantially lower coverage token cost (974.56 vs.\ 1979.26 and 1587.99), indicating more answer-sufficient retrieval under a tighter budget.

\begin{table*}[t]
\centering
\small
\setlength{\tabcolsep}{3.9pt}
\renewcommand{\arraystretch}{1.12}
\caption{\textbf{Expanded results for evidence density and coverage efficiency on LoCoMo (Qwen3-8B).}
We report BLEU and F1 for each question category and the average. \emph{Avg. blocks for coverage} is the average number of retrieved blocks required to cover all answer content tokens, and \emph{Cover token cost/query} is the corresponding token cost (lower is better).}
\label{tab:app_evidence_density_coverage}
\resizebox{\textwidth}{!}{
\begin{tabular}{lrrrrrrrrrrrcc}
\toprule
\textbf{Framework} &
\multicolumn{2}{c}{\textbf{Multi-hop}} &
\multicolumn{2}{c}{\textbf{Temporal}} &
\multicolumn{2}{c}{\textbf{Open-domain}} &
\multicolumn{2}{c}{\textbf{Single-hop}} &
\multicolumn{2}{c}{\textbf{Average}} &
\multicolumn{1}{c}{\makecell{\textbf{Avg. blocks}$\downarrow$\\\textbf{for coverage}}} &
\multicolumn{1}{c}{\makecell{\textbf{Avg. tokens$\downarrow$}\\\textbf{for coverage}}} \\
\cmidrule(lr){2-3}\cmidrule(lr){4-5}\cmidrule(lr){6-7}\cmidrule(lr){8-9}\cmidrule(lr){10-11}
& \textbf{BLEU} & \textbf{F1}
& \textbf{BLEU} & \textbf{F1}
& \textbf{BLEU} & \textbf{F1}
& \textbf{BLEU} & \textbf{F1}
& \textbf{BLEU} & \textbf{F1}
& & \\
\midrule
Naive RAG     & 22.46 & 34.28 & 17.21 & 21.32 & 12.35 & 17.08 & 35.68 & 45.22 & 27.95 & 36.48 & 10.81 & 1979.26 \\
RAG + Pruning & 23.81 & 33.58 & 16.53 & 23.33 & 12.92 & 16.22 & 32.85 & 41.23 & 26.55 & 34.58 & 13.31 & 1587.99 \\
\frameworkname (Ours) & \textbf{27.24} & \textbf{38.57} & \textbf{29.58} & \textbf{37.46} & \textbf{15.55} & \textbf{20.69} & \textbf{40.94} & \textbf{50.94} & \textbf{34.48} & \textbf{43.98} & \textbf{5.66} & \textbf{974.56} \\
\bottomrule
\end{tabular}
}
\end{table*}

\subsection{Full Performance of Retroactive Restructuring in Memory Construction}
\label{app:reassign_details}

\paragraph{Extended results.}
Table~\ref{tab:reassign_full_bycat} provides category-level QA results together with the dynamic reassignment ratio for different construction settings.
Table~\ref{tab:reassign_structure_stats} reports the resulting numbers of group and memory component nodes.

\begin{table*}[t]
\centering
\small
\setlength{\tabcolsep}{3.9pt}
\renewcommand{\arraystretch}{1.12}
\caption{\textbf{Extended results for retroactive restructuring on LoCoMo (Qwen3-8B).}
We report BLEU/F1 by question category and the average, together with the dynamic reassignment ratio under each construction setting.}
\label{tab:reassign_full_bycat}
\resizebox{\textwidth}{!}{
\begin{tabular}{lrrrrrrrrrrrc}
\toprule
\textbf{Setting} &
\multicolumn{2}{c}{\textbf{Multi-hop}} &
\multicolumn{2}{c}{\textbf{Temporal}} &
\multicolumn{2}{c}{\textbf{Open-domain}} &
\multicolumn{2}{c}{\textbf{Single-hop}} &
\multicolumn{2}{c}{\textbf{Average}} &
\multicolumn{1}{c}{\makecell{\textbf{Reassign}\\\textbf{ratio}}} \\
\cmidrule(lr){2-3}\cmidrule(lr){4-5}\cmidrule(lr){6-7}\cmidrule(lr){8-9}\cmidrule(lr){10-11}
& \textbf{BLEU} & \textbf{F1}
& \textbf{BLEU} & \textbf{F1}
& \textbf{BLEU} & \textbf{F1}
& \textbf{BLEU} & \textbf{F1}
& \textbf{BLEU} & \textbf{F1}
& \\
\midrule
w/o merge\&split & 24.01 & 34.35 & 28.24 & 34.90 & 10.97 & 14.04 & 37.24 & 44.23 & 31.30 & 38.59 & 0.00\% \\
w/o split        & 24.46 & 35.46 & 28.55 & 34.77 & 11.61 & 15.75 & 37.69 & 44.79 & 31.73 & 39.18 & 17.33\% \\
w/o merge        & 24.81 & 35.46 & 29.68 & 35.99 & 13.09 & 18.09 & 38.45 & 46.16 & 32.54 & 40.33 & 26.23\% \\
\frameworkname (Full) & \textbf{27.24} & \textbf{38.57} & \textbf{29.58} & \textbf{37.46} & \textbf{15.55} & \textbf{20.69} & \textbf{40.94} & \textbf{50.94} & \textbf{34.48} & \textbf{43.98} & \textbf{44.91}\% \\
\bottomrule
\end{tabular}
}
\end{table*}

\begin{table}[t]
\centering
\small
\setlength{\tabcolsep}{5.4pt}
\renewcommand{\arraystretch}{1.12}
\caption{\textbf{Structure statistics under different construction settings.}
We report the resulting numbers of group and memory component nodes.}
\label{tab:reassign_structure_stats}
\begin{tabular}{lcc}
\toprule
Setting & \#Groups & \#Components \\
\midrule
w/o merge\&split & 1114 & 2876 \\
w/o split        & 651  & 2897 \\
w/o merge        & 1122 & 2928 \\
Full             & 642  & 2879 \\
\bottomrule
\end{tabular}
\end{table}

\paragraph{Summary.}
Disabling both operators yields a static hierarchy with 0\% reassignment and the lowest overall accuracy.
Allowing split or merge enables retroactive reassignment and improves QA, while the full system achieves the highest reassignment ratio and the best accuracy across all categories.
Split accounts for a larger share of reassignment, whereas merge is important for consolidating redundant groups and producing a compact high-level organisation, as reflected by the reduced number of groups under the full setting.

\subsection{Additional Cost--Performance Statistics}
\label{app:cost_perf_stats}

Table~\ref{tab:app_cost_perf_stats} reports the token and performance statistics used in Figure~\ref{fig:locomo_cost_tradeoff}. Following the LightMem-style reporting setup~\cite{fang2026lightmem}, We report the average construction cost and inference-time token cost per query, and average BLEU/F1 on LoCoMo with Qwen3-8B. Methods marked with \flatmark{} are flat retrieval baselines, and methods marked with \structmark{} are structured memory baselines. The table shows that \frameworkname introduces moderate construction overhead compared with Nemori, but substantially reduces inference-time token cost while achieving the highest BLEU and F1.

\begin{table}[t]
\centering
\small
\setlength{\tabcolsep}{6pt}
\renewcommand{\arraystretch}{1.12}
\caption{
\textbf{Token and performance statistics for the cost--performance analysis on LoCoMo with Qwen3-8B.}
Construction Token denotes the memory construction cost in thousands of tokens, following the LightMem-style reporting setup.
Token/query denotes the average inference-time token cost per query.
Methods marked with \flatmark{} are flat retrieval baselines, and methods marked with \structmark{} are structured memory baselines.
}
\label{tab:app_cost_perf_stats}
\resizebox{0.88\textwidth}{!}{%
\begin{tabular}{lrrrr}
\toprule
\textbf{Method} &
\makecell{\textbf{Construction}\\\textbf{Token (k)}} &
\makecell{\textbf{Token}\\\textbf{/query}} &
\textbf{Avg BLEU} &
\textbf{Avg F1} \\
\midrule
Full Memory\flatmark & 0.00 & 18535.90 & 25.83 & 33.54 \\
Naive RAG\flatmark   & 0.00 & 8633.28  & 27.95 & 36.48 \\
LightMem\flatmark    & 125.95 & 5545.35 & 26.04 & 33.66 \\
\cmidrule(lr){1-5}
Nemori\structmark    & 381.29 & 7754.66 & 31.44 & 40.88 \\
A-Mem\structmark     & 1008.55 & 9103.46 & 27.84 & 37.13 \\
MemoryOS\structmark  & 481.89 & 7234.66 & 29.20 & 33.76 \\
\cmidrule(lr){1-5}
\frameworkname\textbf{ (Ours)} & 421.91 & \textbf{4711.29} & \textbf{34.48} & \textbf{43.98} \\
\bottomrule
\end{tabular}%
}
\end{table}

\section{Case Study: Decoupling before Aggregation}
\label{app:case_study}

To illustrate the memory constructed by \frameworkname, Figure~\ref{fig:case_study_gina} shows a qualitative example from LoCoMo. The query asks: \emph{``When did Gina lose her job at DoorDash?''}, whose gold answer is \emph{``January 2023.''} The original memory stream contains several related events: Gina mentions losing her DoorDash job in January, later refers again to the job loss, and subsequently discusses opening and promoting an online clothing store. The challenge is therefore not that relevant information is missing, but that the decisive temporal evidence is embedded within a cluster of highly similar memories.

The left side of Figure~\ref{fig:case_study_gina} shows how flat similarity-based retrieval behaves over this raw memory stream. Since later memories are semantically close to the query, top-$k$ retrieval selects broadly relevant memories about Gina's job loss and business transition. However, these memories mix the original job-loss event with its later consequences, so the temporal anchor is not clearly isolated and the reader produces a vague answer.

\begin{figure*}[t]
    \centering
    \includegraphics[width=\linewidth]{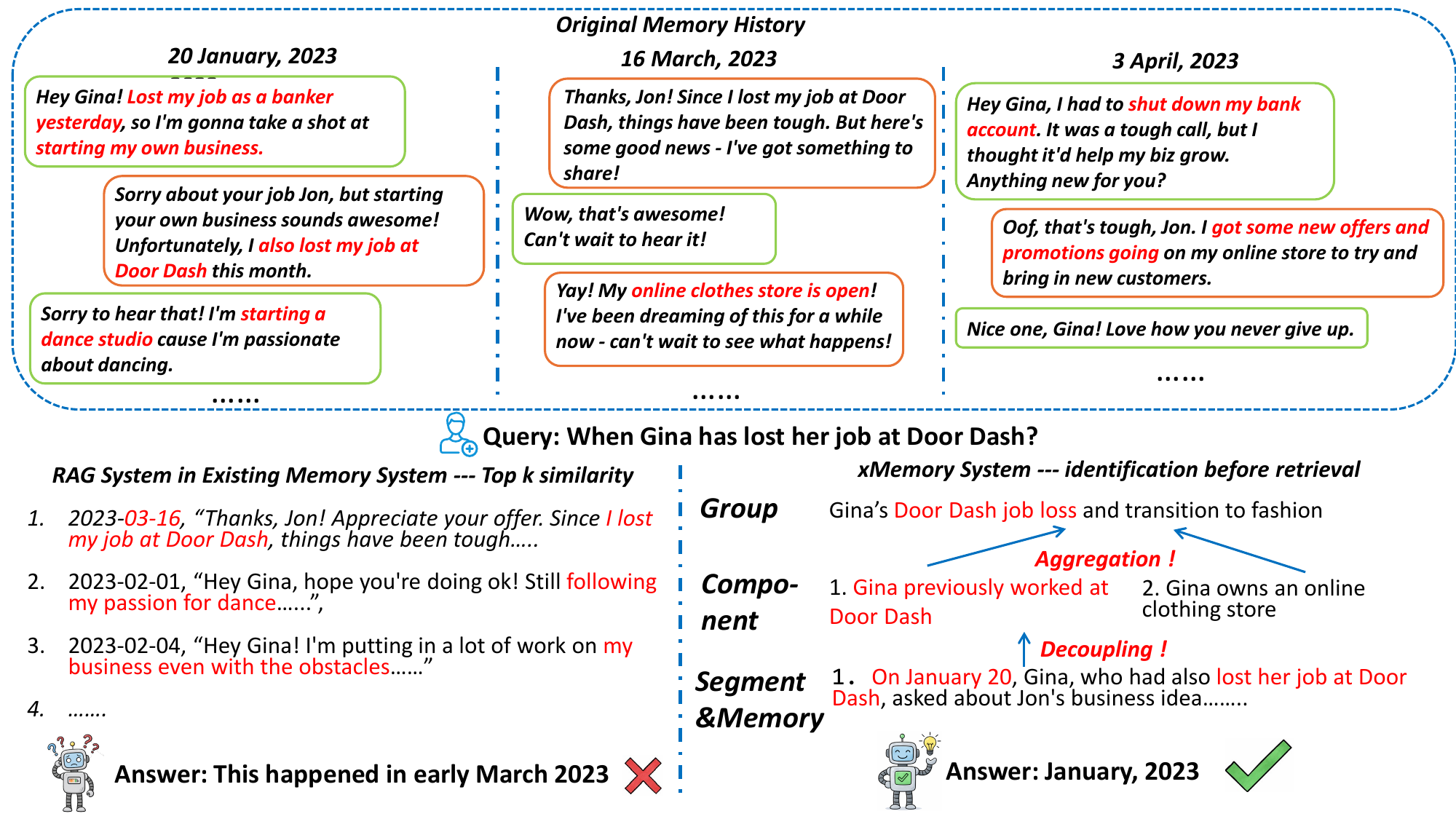}
    \caption{\textbf{Case study on LoCoMo.}
    The query asks \emph{``When did Gina lose her job at DoorDash?''} The original memory stream contains several similar events about Gina's DoorDash job loss and later clothing-store transition. Flat similarity retrieval selects broadly relevant memories from this cluster, but mixes the original job-loss event with later consequences. In contrast, \frameworkname organises the same history into a hierarchy: a group provides coarse access to the relevant memory region, components separate the DoorDash job-loss fact from related business-transition facts, and the linked source segment preserves the January evidence needed for answer generation.}
    \label{fig:case_study_gina}
\end{figure*}

The right side of Figure~\ref{fig:case_study_gina} illustrates how \frameworkname represents the same evidence through its hierarchy. The group provides compact access to Gina's employment and business-transition memories. Within this group, separate memory components distinguish related but different facts, such as Gina's prior DoorDash job and her later online clothing store. The selected component then points back to the source segment, where the January interaction provides the original local context and temporal evidence.

This example illustrates why the hierarchy is not merely a compressed summary of the raw history. Groups provide coarse access to related evidence regions, components preserve distinctions among similar events, and source-linked segments retain the original context needed for final answer generation.

\section{Limitations}
\label{app:limitations}

This work focuses on retrieval and memory organisation for long-term agent memory, and its empirical evaluation is conducted on LoCoMo and PerLTQA. Although these benchmarks cover multi-session dialogue and personalised long-term memory, they do not exhaust all possible agent-memory settings. We therefore view extending the evaluation to broader agentic settings as an important direction for future work.

xMemory also introduces additional memory-construction cost because it extracts memory components and maintains a revisable hierarchy. Our experiments include an end-to-end cost--performance analysis that accounts for construction and inference tokens, but the practical trade-off may depend on the frequency of memory updates and future queries. In applications with very few queries after memory construction, simpler retrieval methods may be preferable.

Finally, the current implementation relies on LLM-generated memory components and summaries. While the structure is designed to reduce redundancy and preserve links to source segments, errors in component extraction may still affect downstream retrieval. Future work could investigate more robust extraction, verification, and privacy-preserving memory maintenance.

\section{Broader Impact Statement}
\label{app:broader_impact}

This work advances agent-memory retrieval for long-horizon interactions by reducing redundant context while preserving temporally linked evidence. It may improve the reliability and efficiency of applications such as personal assistants, long-term dialogue systems, and multi-session decision-support tools, while reducing inference cost through lower token usage.

Potential risks include privacy leakage, unintended retention of sensitive user information, and harmful reliance on outdated or incorrect memories when deployed with real user data. More effective memory retrieval may also increase the importance of responsible memory management, since retrieved information can influence downstream agent behaviour. Our work does not introduce new data collection and is evaluated on existing benchmarks, but practical deployments should incorporate user consent, data minimisation, access control, retention and deletion policies, and mechanisms for users to inspect, correct, or override stored memories. Overall, we do not anticipate societal impacts beyond those commonly associated with retrieval-augmented LLM systems, but responsible use requires careful privacy and safety safeguards.


\section{Prompt Design}
\label{app:prompt design}

In this section, we present the specific prompts used for the experiments in Section \ref{sec:implementation_details}.

\subsection{Memory Structure Management}
This appendix lists the prompts used to construct our hierarchical memory from fine to coarse granularity.
We first apply a boundary detector to segment the dialogue stream into coherent segments.
For each segment, we generate an episodic memory record with a title, third-person narrative content, and an explicit timestamp.
We then distill high-value, persistent memory components from accumulated segments to form reusable long-term knowledge, filtering out transient conversation details.
Finally, we summarise clusters of related memory component statements into concise group descriptions that serve as stable high-level indices for structure management and retrieval.

\begin{table*}[!htbp]

    \centering
    \small
    \begin{tcolorbox}[width=\textwidth, title={\textbf{Prompt for Boundary Detect}}]

You are a dialogue boundary detection expert. You need to determine if the newly added dialogue should end the current episode and start a new one as a segment.

Current conversation history:
$\{conversation\ history\}$

Newly added messages:
$\{new\ messages\}$

Please carefully analyze the following aspects to determine if a new segment should begin:

1. **Topic Change** (Highest Priority):
   - Do the new messages introduce a completely different topic?
   - Is there a shift from one specific event to another?
   - Has the conversation moved from one question to an unrelated new question?

2. **Intent Transition**:
   - Has the purpose of the conversation changed? (e.g., from casual chat to seeking help, from discussing work to discussing personal life)
   - Has the core question or issue of the current topic been answered or fully discussed?

3. **Temporal Markers**:
   - Are there temporal transition markers ("earlier", "before", "by the way", "oh right", "also", etc.)?
   - Is the time gap between messages more than 30 minutes?

4. **Structural Signals**:
   - Are there explicit topic transition phrases ("changing topics", "speaking of which", "quick question", etc.)?
   - Are there concluding statements indicating the current topic is finished?

5. **Content Relevance**:
   - How related is the new message to the previous discussion? (Consider splitting if relevance < 30
   - Does it involve completely different people, places, or events?

Decision Principles:
- **Prioritize topic independence**: Each episodic segment should revolve around one core topic or event
- **When in doubt, split**: When uncertain, lean towards starting a new episodic segment
- **Maintain reasonable length**: A single episodic segment typically shouldn't exceed 10-15 messages

Note:
- If conversation history is empty, this is the first message, return false
- When a clear topic change is detected, split even if the conversation flows naturally
- Each episodic segment should be a self-contained conversational unit that can be understood independently

    \end{tcolorbox}
        \label{tab:prompt for boundary detect}
\end{table*}

\begin{table*}[!htbp]

    \centering
    \small
    \begin{tcolorbox}[width=\textwidth, title={\textbf{Prompt for Episodic Generation}}]

You are an episodic segment memory generation expert. Please convert the following conversation into an episodic segment memory.

Conversation content:
${conversation}$

Boundary detection reason:
${boundary\ reason}$

Please analyze the conversation to extract time information and generate a structured episodic memory. Return containing the following three fields:
{{
    "title": "A concise, descriptive title that accurately summarises the group (10-20 words)",
    "content": "A detailed description of the conversation in third-person narrative. It must include all important information: who participated in the conversation at what time, what was discussed, what decisions were made, what emotions were expressed, and what plans or outcomes were formed. Write it as a coherent story so that the reader can clearly understand what happened. Ensure that time information is precise to the hour, including year, month, day, and hour.",
    "timestamp": "YYYY-MM-DDTHH:MM:SS format timestamp representing when this episodic segment occurred (analyse from message timestamps or content)"
}}

Time Analysis Instructions:
1. **Primary Source**: Look for explicit timestamps in the message metadata or content
2. **Secondary Source**: Analyze temporal references in the conversation content ("yesterday", "last week", "this morning", etc.)
3. **Fallback**: If no time information is available, use a reasonable estimate based on context
4. **Format**: Always return timestamp in ISO format: "2024-01-15T14:30:00"

Requirements:
1. The title should be specific and easy to search (including key topics/activities).
2. The content must include all important information from the conversation.
3. Convert the dialogue format into a narrative description.
4. Maintain chronological order and causal relationships.
5. Use third-person unless explicitly first-person.
6. Include specific details that aid keyword search.
7. Notice the time information, and write the time information in the content.
8. When relative times (e.g., last week, next month, etc.) are mentioned in the conversation, you need to convert them to absolute dates (year, month, day). Write the converted time in parentheses after the original time reference.
9. **IMPORTANT**: Analyze the actual time when the conversation happened from the message timestamps or content, not the current time.

Example:
If the conversation is about someone planning to go hiking and the messages have timestamps from March 14, 2024 at 3:00 PM:
{{
    "title": "Weekend Hiking Plan March 16, 2024: Sunrise Trip to Mount Rainier",
    "content": "On March 14, 2024 at 3:00 PM, the user expressed interest in going hiking on the upcoming weekend (March 16, 2024) and sought advice. They particularly wanted to see the sunrise at Mount Rainier, having heard the scenery is beautiful. When asked about gear, they received suggestions including hiking boots, warm clothing (as it's cold at the summit), a flashlight, water, and high-energy food. The user decided to leave at 4:00 AM on Saturday, March 16, 2024 to catch the sunrise and planned to invite friends for the adventure. They were very excited about the trip, hoping to connect with nature.",
    "timestamp": "2024-03-14T15:00:00"
}} 

    \end{tcolorbox}
        \label{tab:prompt for Episodic Generation}
\end{table*}

\begin{table*}[!htbp]

    \centering
    \small
    \begin{tcolorbox}[width=\textwidth, title={\textbf{Prompt for Memory Component Generation}}]

    You are an AI memory system. Extract HIGH-VALUE, PERSISTENT semantic memories as memory components from the following segments.

CRITICAL: Focus on extracting LONG-TERM VALUABLE KNOWLEDGE, not temporary conversation details.

Segments to analyze:
{segments}

\#\# HIGH-VALUE Knowledge Criteria

Extract ONLY knowledge that passes these tests:
- **Persistence Test**: Will this still be true in 6 months?
- **Specificity Test**: Does it contain concrete, searchable information?
- **Utility Test**: Can this help predict future user needs?
- **Independence Test**: Can be understood without conversation context?

\#\# HIGH-VALUE Categories (FOCUS ON THESE):

1. **Identity \& Professional**
   - Names, titles, companies, roles
   - Education, qualifications, skills
   
2. **Persistent Preferences**  
   - Favorite books, movies, music, tools
   - Technology preferences with reasons
   - Long-term likes and dislikes
   
3. **Technical Knowledge**
   - Technologies used (with versions)
   - Architectures, methodologies
   - Technical decisions and rationales
   
4. **Relationships**
   - Names of family, colleagues, friends
   - Team structure, reporting lines
   - Professional networks
   
5. **Goals \& Plans**
   - Career objectives
   - Learning goals
   - Project plans
   
6. **Patterns \& Habits**
   - Regular activities
   - Workflows, schedules
   - Recurring challenges

\#\# Examples:

HIGH-VALUE (Extract these):
- "Caroline's favorite book is 'Becoming Nicole' by Amy Ellis Nutt"
- "The user works at ByteDance as a senior ML engineer"
- "The user prefers PyTorch over TensorFlow for debugging"
- "The user's team lead is named Sarah"
- "The user is learning Rust for systems programming"
- "The user has been practicing yoga since March 2021"
- "The user joined Amazon in August 2020 as a data scientist"
- "The user plans to relocate to Seattle in January 2025"

LOW-VALUE (Skip these):
- "The user thanked the assistant"
- "The user was confused about X"
- "The user appreciated the help"
- "The conversation was productive"
- Any temporary emotions or reactions

Quality over quantity - extract only knowledge that truly helps understand the user long-term.

    \end{tcolorbox}
        \label{tab:prompt for Semantic Generation}
\end{table*}

\begin{table*}[!htbp]

    \centering
    \small
    \begin{tcolorbox}[width=\textwidth, title={\textbf{Prompt for group Generation}}]

    "You are building a stable high-level group summary that captures recurring facts."
    "Given the following factual statements, write a concise abstract group that names the topic:"
    f"{joined}"
    "Return only the summary."

    \end{tcolorbox}
        \label{tab:prompt for Theme Generation}
\end{table*}

\subsection{Answer Generation}
We provide dataset-specific answer generation prompts to match the distinct answer formats and evaluation protocols of LoCoMo and PerLTQA.
For LoCoMo, the prompt enforces short phrase-style outputs and explicit handling of temporal reasoning by resolving relative time expressions using memory timestamps.
For PerLTQA, the prompt requests a single complete sentence and prioritises explicit factual attributes from the memory store, resolving conflicts by recency.
Importantly, to ensure fair comparison, we keep the final answer prompt identical across different baselines within each dataset.

\begin{table*}[!htbp]

    \centering
    \small
    \begin{tcolorbox}[width=\textwidth, title={\textbf{Prompt for Answer Generation in LOCOMO Dataset}}]

    You are an intelligent memory assistant tasked with retrieving accurate information from conversation memories.

    \# CONTEXT:
    You have access to memories from two speakers in a conversation. These memories contain
    timestamped information that may be relevant to answering the question.

    \# INSTRUCTIONS:
    1. Carefully analyze all provided memories from both speakers
    2. Pay special attention to the timestamps to determine the answer
    3. If the question asks about a specific event or fact, look for direct evidence in the memories
    4. If the memories contain contradictory information, prioritize the most recent memory
    5. If there is a question about time references (like "last year", "two months ago", etc.),
       calculate the actual date based on the memory timestamp. For example, if a memory from
       4 May 2022 mentions "went to India last year," then the trip occurred in 2021.
    6. Always convert relative time references to specific dates, months, or years. For example,
       convert "last year" to "2022" or "two months ago" to "March 2023" based on the memory
       timestamp. Ignore the reference while answering the question. 
       Do not answer temporal questions in timestamp such as "2023-05-08", but answer naturally such as "7 May 2023" or "The week before 6 July 2023"!   
    7. Focus only on the content of the memories from both speakers. Do not confuse character
       names mentioned in memories with the actual users who created those memories.
    8. The answer should be less than 5-8 words.
    
    Component Memories:
    {{ components }}

    Segment Memories:
    {{ segments }}

    Question: {{ question }}

    Answer:

    \end{tcolorbox}
        \label{tab:prompt for Answer Generation in LOCOMO Dataset}
\end{table*}

\begin{table*}[!htbp]

    \centering
    \small
    \begin{tcolorbox}[width=\textwidth, title={\textbf{Prompt for Answer Generation in PerLTQA Dataset}}]

You are a memory question-answering assistant. Answer the question using ONLY the information in the provided memories.

Memories may contain profiles, events, and relationship descriptions. Prefer explicit facts (e.g., "Gender: female", "Nationality: China", dates, numbers, names).
If multiple memories conflict, choose the most recent one by timestamp.

OUTPUT RULES (VERY IMPORTANT):
- Write ONE concise, complete sentence.
- Preserve the key value EXACTLY as stated in the memories (names, numbers, dates, gender terms, nationality words).
- Do NOT add extra explanation.
- Do NOT include quotes, bullet points, or prefixes like "Answer:".

Segment Memories:
{{ Segment }}

Component Memories:
{{ component }}

Question: {{ question }}

Answer:

    \end{tcolorbox}
        \label{tab:prompt for Answer Generation in PerLTQA Dataset}
\end{table*}


\newpage

\end{document}